\algrenewcommand{\algorithmiccomment}[1]{$\vartriangleright$ #1}
\algrenewcommand{\algorithmicreturn}{\textbf{Return: }}
\algnewcommand\algorithmicinput{\textbf{Input: }}
\algnewcommand\Input{\State \algorithmicinput}
  \definecolor{mydarkblue}{rgb}{0,0.08,0.45}
\colorlet{darkblue}{blue!80!black}
\colorlet{darkred}{red!80!black}
\colorlet{darkgreen}{green!60!black}
\colorlet{darkmagenta}{orange!80!black}
\colorlet{darkyellow}{purple!80!black}
\newcommand{\myred}[1]{\textcolor{darkred}{#1}}
\newcommand{\myblue}[1]{\textcolor{darkblue}{#1}}
\newcommand{\mygreen}[1]{\textcolor{darkgreen}{#1}}
\newcommand{\mygray}[1]{\textcolor{gray}{#1}}
\newcommand{\mymagenta}[1]{\textcolor{darkmagenta}{#1}}
\newcommand{\myyellow}[1]{\textcolor{darkyellow}{#1}}
\newtheorem{lemma}{Lemma} 
\newtheorem{proposition}{Proposition}
\newtheorem{definition}{Definition}
\def\EE{{\mathbb E}}
\def\RR{{\mathbb R}}
\def\PP{{\mathbb P}}
\def\x{{\bm x}}
\def\e{{\bm e}}
\def\c{{\bm c}}
\def\v{{\bm v}}
\def\w{{\bm w}}
\def\c{{\bm c}}
\def\l{{\bm l}}
\def\A{{\bm A}}
\def\Q{{\bm Q}}
\def\L{{\bm L}}
\def\softmax{{\textrm{softmax}}}
\def\B{{\bm B}}
\def\C{{\bm C}}
\def\Ytrue{{{\bm Y}_{\textrm{true}}}}
\def\T{{\bm T}}
\def\E{{\bm E}}
\def\J{{\bm J}}
\def\L{{\bm L}}
\def\M{{\bm M}}
\def\U{{\bm U}}
\def\X{{\bm X}}
\def\Y{{\bm Y}}
\def\Z{{\bm Z}}
\def\H{{\bm H}}
\def\p{{\bm \p}}
\def\b{{\bm b}}
\def\a{{\bm a}}
\def\q{{\bm q}}
\def\s{{\bm s}}
\def\u{{\bm u}}
\def\x{{\bm x}}
\def\W{{\bm W}}
\def\Z{{\bm Z}}
\def\z{{\bm z}}
\def\y{{\bm y}}
\def\p{{\bm p}}
\def\q{{\bm q}}
\newcommand{\Simplex}{\triangle}
\newcommand{\bTheta}{\bm \Theta}
\newcommand{\blambda}{\boldsymbol{\lambda}}
\renewcommand\P{{\bm P}}
\newcommand{\cP}{{\mathcal P}}
\newcommand{\cE}{{\mathcal E}}
\newcommand{\cV}{{\mathcal V}}
\newcommand{\cD}{{\mathcal D}}
\newcommand{\cC}{{\mathcal C}}
\newcommand{\cX}{{\mathcal X}}
\newcommand{\cY}{{\mathcal Y}}
\newcommand{\btheta}{\boldsymbol{\theta}}
\def\bigO{{\mathcal{O}}}
\DeclareMathOperator*{\argmin}{argmin}
\DeclareMathOperator*{\argmax}{argmax}
\definecolor{dred}{rgb}{0.8,0,0}
\definecolor{dgreen}{rgb}{0,0.8,0}
\definecolor{dblue}{rgb}{0,0,0.8}
\definecolor{dpurple}{rgb}{0.8,0,0.8}
\newcommand\wrt{w.r.t.\ }
\DeclareMathOperator*{\conv}{conv}
\DeclareMathOperator*{\supp}{supp}
\newcommand\maxOmega{\textstyle{\max_\Omega}}
\newcommand\minOmega{\textstyle{\min_\Omega}}
\newcommand\dtwOmega{\text{DTW}_\Omega}
\newcommand\viterbiOmega{\text{Vit}_\Omega}
\newcommand\dpz{\text{DP}}
\newcommand\dpOmega{\text{DP}_\Omega}
\newcommand\lp{\text{LP}}
\newcommand\lpOmega{\text{LP}_\Omega}
\newcommand\zeros{\mathbf{0}}
\newcommand\ones{\mathbf{1}}
\newcommand\diag{\text{Diag}}
\newcommand\partialfrac[2]{\frac{\partial #1}{\partial #2}}
\DeclareMathOperator*{\dom}{dom}
\newcommand\proba[2]{p_{#2}(#1)}
\newcommand\expect[2]{\EE_{#2}[#1]}
\newcommand*{\ie}{\textit{i.\,e.}}
\newlength{\offsetpage}
\newenvironment{widepage}{\begin{adjustwidth}{-\offsetpage}{-\offsetpage}%
    \addtolength{\textwidth}{2\offsetpage}}%
{\end{adjustwidth}}
\newenvironment{keyword}{%
\vspace{.1em}\begin{center}\begin{minipage}{0.89\textwidth}
\textbf{Index words.}}
{\par\noindent\end{minipage}\end{center}}
\begin{document}

\makeatletter
\def\blfootnote{\gdef\@thefnmark{}\@footnotetext}
\makeatother

\title{Differentiable Dynamic Programming for \\ Structured Prediction and Attention}

\author{
\vspace{.5em}
 \hfill
  \begin{tabular}[t]{c}
      Arthur Mensch\textsuperscript{*}\\
  Inria Parietal\\
  Saclay, France\\
  \texttt{arthur.mensch@m4x.org} \\
  \\
  Mathieu Blondel\\
  NTT Communication Science Laboratories\\
  Kyoto, Japan \\
  \texttt{mathieu@mblondel.org} \\
  \end{tabular}
  \vspace{.5em}
}
\maketitle

\begin{abstract}
Dynamic programming (DP) solves a variety of structured combinatorial
problems
by iteratively breaking them down into smaller subproblems. In spite of their versatility, DP
algorithms are usually non-differentiable, which hampers their use as a layer in
neural networks trained by backpropagation.  To address this issue, we
propose to smooth the max operator in the dynamic programming recursion, using a
strongly convex regularizer. This allows to relax both the optimal value and
solution of the original combinatorial problem, and turns a broad
class of DP algorithms into differentiable operators.  Theoretically, we provide a
new probabilistic perspective on backpropagating through these DP operators, and
relate them to inference in graphical models.  We derive two particular
instantiations of our framework, a smoothed Viterbi algorithm for sequence
prediction and a smoothed DTW algorithm for time-series alignment. We
showcase these instantiations on two structured prediction tasks and on
structured and sparse attention for neural machine translation.
\end{abstract}

\begin{keyword}
    Dynamic programming, smoothing, structured prediction, attention, Viterbi, DTW
\end{keyword}

\blfootnote{\textsuperscript{*}Work performed during an internship at NTT
Communication Science laboratories, Kyoto, Japan.}

\section{Introduction}
\label{sec:introduction}

Modern neural networks are composed of multiple layers of nested functions.  
Although layers are usually constituted
of elementary linear algebraic operations and simple non-linearities,
there is a growing need for layers that output the \textit{value} or the
\textit{solution} of an optimization problem. This can be used to design loss
functions that capture relevant regularities in the input
\citep{lample_2016,soft_dtw} or to
create layers that impose prior structure on the output
\citep{kim_structured_2017,optnet,niculae_blondel_2017,
djolonga_2017}.

Among these works, several involve a convex optimization problem
\citep{optnet,niculae_blondel_2017,djolonga_2017}; others solve certain combinatorial
optimization problems by dynamic programming
\citep{lample_2016,kim_structured_2017,soft_dtw}.  However, because 
dynamic programs~\citep{bellman_1952} are usually non-differentiable, virtually all these works
resort to the formalism of conditional random fields (CRFs) \citep{lafferty_crf}, which
can be seen as changing the semiring used by the dynamic program ---
replacing all values by their exponentials and all $(\max,+)$ operations with
$(+,\times)$ operations \citep{verdu1987}. While this modification smoothes the
dynamic program, it looses the sparsity of solutions, since hard assignments
become soft ones. Moreover, a general understanding of how to relax and
differentiate dynamic programs is lacking. In this work, we propose to do so by
leveraging smoothing~\citep{moreau_proximite_1965, nesterov_smooth} and
backpropagation~\citep{linnainmaa_1970}. We make the
following contributions.

1) We present a unified framework for
turning a broad class of dynamic programs (DP) into differentiable operators.
Unlike existing works, we propose to change the semiring to
use $(\maxOmega, +)$ operations, where $\maxOmega$ is a max
operator smoothed with a strongly convex regularizer $\Omega$
(\S\ref{sec:smoothed_max_op}). 

2) We show that the resulting DP operators, that we call $\dpOmega$, are
smoothed relaxations of the original DP algorithm and satisfy several key
properties, chief among them convexity.  In addition, we show that their
gradient, $\nabla \dpOmega$, is equal to the \textit{expected trajectory} of a
certain random walk and can be used as a sound relaxation to the original
dynamic program's solution. Using negative entropy for $\Omega$ recovers
existing CRF-based works from a different perspective --- we provide new
arguments as to why this $\Omega$ is a good choice. On the other hand, using squared
$\ell_2$ norm for $\Omega$ leads to new algorithms whose expected solution is
\textit{sparse}. We derive a clean and efficient method to backpropagate
gradients, both through $\dpOmega$ and $\nabla \dpOmega$.  This allows us to
define differentiable DP layers that can be incorporated in neural networks
trained end-to-end (\S\ref{sec:differentiable_dp_layers}).  

3) We illustrate how to to derive two particular instantiations of our
framework, a smoothed Viterbi algorithm for sequence prediction and a smoothed
DTW algorithm for supervised time-series alignment (\S\ref{sec:examples}).
The latter is illustrated in Figure \ref{fig:smooth_dtw}.
Finally, we showcase these two instantiations on structured prediction
tasks (\S\ref{sec:structured_prediction}) and on structured attention
for neural machine translation (\S\ref{sec:structured_attention}).

\paragraph{Notation.} 

We denote scalars, vectors and matrices using lower-case, bold lower-case and
bold upper-case letters, \textit{e.g.,} $y$, $\y$ and $\Y$.  We denote the
elements of $\Y$ by $y_{i,j}$ and its rows by $\y_i$.
We denote the Frobenius inner product between
$\A$ and $\B$ by $\langle \A, \B \rangle \triangleq \sum_{i,j} a_{i,j}
b_{i,j}$. We denote the $(D-1)$-probability simplex by $\Simplex^D
\triangleq \{\blambda \in \RR_+^D \colon \|\blambda\|_1 = 1\}$.
We write $\conv(\cY) 
\triangleq \{\sum_{\Y \in \cY}
\lambda_{\Y} \Y \colon \blambda \in \Simplex^{|\cY|}\}$
the convex hull of $\cY$,
$[N]$ the set $\{1,\dots,N\}$ and
$\supp(\x) \triangleq \{j \in [D] \colon
x_j \neq 0\}$ the support of $\x \in \RR^D$.
We denote the Shannon entropy by $H(\q)
\triangleq \sum_i q_i \log q_i$.

We will release an optimized modular \textit{PyTorch} implementation
for reproduction and reuse.

\begin{figure}[t]
    \centering
    \includegraphics[width=.8\linewidth]{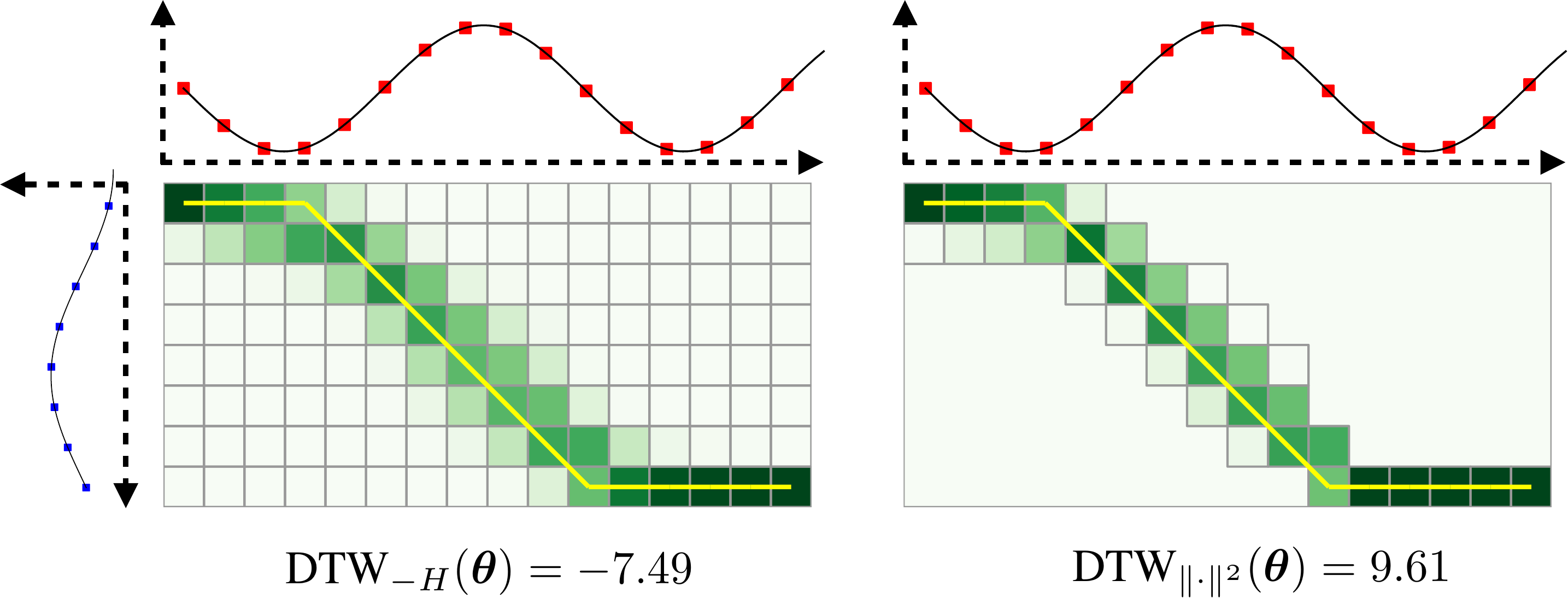}
    \caption{$\dtwOmega(\btheta)$ is an instantiation of the proposed smoothed
        dynamic programming operator, $\dpOmega(\btheta)$, to the dynamic time warping (DTW)
        computational graph.  In this picture, $\btheta$ is the squared
        Euclidean distance matrix between the observations of two time-series.  The gradient $\nabla
        \dtwOmega(\btheta)$ is equal to the expected alignment under a certain
        random walk characterized in \S\ref{sec:differentiation} and is a
        sound continuous relaxation to the hard DTW alignment between the two
        time-series (here depicted with a yellow path). Unlike negentropy
        regularization (left), $\ell_2^2$ regularization leads to exactly
        sparse alignments (right).
Our framework allows to backpropagate through
        both $\dtwOmega(\btheta)$ and $\nabla \dtwOmega(\btheta)$, which makes
        it possible to learn the distance matrix~$\btheta$ end-to-end.  }
    \label{fig:smooth_dtw}
\end{figure}

\section{Smoothed max operators}
\label{sec:smoothed_max_op}

In this section, we introduce smoothed max operators \citep{nesterov_smooth,
beck_smoothing_2012,niculae_blondel_2017},
that will serve as a powerful and generic abstraction to define differentiable
dynamic programs in \S\ref{sec:differentiable_dp_layers}.
Formally, let $\Omega: \RR^D \to \RR$ be a strongly convex regularizer
on $\Simplex^D$ and let $\x \in \RR^D$. We define the max operator smoothed by
$\Omega$ as:
\begin{equation}
\maxOmega(\x) \triangleq \displaystyle{\max_{\q \in \Simplex^D}} ~
\langle \q, \x \rangle  - \Omega(\q).
\label{eq:smoothed_max}
\end{equation}
In other words, $\maxOmega$ is the convex conjugate of $\Omega$, restricted to
the simplex.  From the duality between strong convexity and smoothness,
$\maxOmega$ is smooth: differentiable everywhere and with Lipschitz continuous
gradient.  Since the argument that achieves the maximum in
\eqref{eq:smoothed_max} is unique, from Danskin's theorem
(\citeyear{danskin_theorem}),
it is equal to the gradient of $\maxOmega$:
\begin{equation}
\nabla \maxOmega(\x) = \displaystyle{\argmax_{\q \in \Simplex^D}} ~ \langle \q, \x \rangle -
\Omega(\q).
\end{equation}
The gradient is differentiable almost
everywhere for any strongly-convex $\Omega$ (everywhere for negentropy). 
Next, we state properties that will be useful
throughout this paper.
\begin{lemma}{Properties of $\maxOmega$ operators}

Let $\x = (x_1, \dots, x_D)^\top \in \RR^D$.
\begin{enumerate}[topsep=0pt,itemsep=3pt,parsep=3pt]
    \item \label{property:maxOmega_bound} Boundedness: If $\Omega$ is
        lower-bounded by $L_{\Omega,D}$ and upper-bounded by $U_{\Omega,D}$ on
        the simplex $\Simplex^D$, then \\
        $\max(\x) - U_{\Omega,D} \le \maxOmega(\x) \le \max(\x) - L_{\Omega,D}$.
    \item \label{property:maxOmega_distrib} Distributivity of $+$ over
        $\maxOmega$: $\maxOmega(\x + c \ones) = \maxOmega(\x) + c \quad
        \forall c \in \RR$.
    \item \label{property:maxOmega_comm} Commutativity: If $\Omega(\P \q) =
        \Omega(\q)$, where $\P$ is a permutation matrix, then $\maxOmega(\P \x)
        = \maxOmega(\x)$.
    \item \label{property:maxOmega_non_decreasing} Non-decreasingness in each
        coordinate: 
        $\maxOmega(\x) \le \maxOmega(\y) \quad \forall \x \le \y$.
    \item \label{property:nabla_maxOmega_j_equal_0}
        Insensitivity to $-\infty$: 
        $x_j = -\infty \Rightarrow \nabla \maxOmega(\x)_j = 0$.
\end{enumerate}
\label{lemma:max_Omega_properties}
\end{lemma}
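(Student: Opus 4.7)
The plan is to derive all five properties directly from the variational representation in \eqref{eq:smoothed_max}, exploiting the fact that the feasible set $\Simplex^D$ is invariant under several natural operations and that $\q$ has non-negative coordinates summing to one. I would not need to invoke any heavy machinery beyond standard convex-analysis facts and the existence/uniqueness of the maximizer (already established before the lemma).

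For property \ref{property:maxOmega_bound}, I would sandwich $\maxOmega(\x)$. The upper bound follows by using $\langle \q, \x\rangle \le \max(\x)$ for any $\q \in \Simplex^D$ together with $\Omega(\q) \ge L_{\Omega,D}$. The lower bound follows by plugging the specific choice $\q = \e_{j^\star}$ with $j^\star \in \argmax_j x_j$ into the variational problem, which gives $\maxOmega(\x) \ge \max(\x) - \Omega(\e_{j^\star}) \ge \max(\x) - U_{\Omega,D}$. For property \ref{property:maxOmega_distrib}, I would use $\langle \q, \x + c\ones\rangle = \langle \q, \x\rangle + c\langle \q, \ones\rangle$ and the simplex constraint $\langle \q, \ones\rangle = 1$ to pull the constant $c$ out of the maximum. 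For property \ref{property:maxOmega_comm}, I would perform the change of variable $\q' = \P^\top \q$: since $\P$ is a permutation, it maps $\Simplex^D$ bijectively onto itself, $\langle \q, \P\x\rangle = \langle \P^\top \q, \x\rangle = \langle \q', \x\rangle$, and the hypothesis $\Omega(\P \q) = \Omega(\q)$ (which I would note is equivalent to $\Omega(\P^\top \q) = \Omega(\q)$ since the group of permutations is closed under inverses) leaves the objective unchanged.

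Property \ref{property:maxOmega_non_decreasing} is the most straightforward: for any $\q \in \Simplex^D$, coordinates of $\q$ are non-negative, so $\x \le \y$ entrywise implies $\langle \q, \x\rangle \le \langle \q, \y\rangle$; taking $\maxOmega$ of both sides (i.e., subtracting $\Omega(\q)$ and maximizing over $\q$) preserves the inequality. For property \ref{property:nabla_maxOmega_j_equal_0}, the idea is that any $\q$ with $q_j > 0$ would give objective value $-\infty$ when $x_j = -\infty$, whereas restricting the optimization to the face $\{\q \in \Simplex^D : q_j = 0\}$ produces a finite value (assuming at least one $x_k$ is finite, which I would note as the nontrivial case). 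Hence the unique maximizer $\nabla \maxOmega(\x) = \argmax_\q \langle \q, \x\rangle - \Omega(\q)$ must satisfy $(\nabla \maxOmega(\x))_j = 0$.

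I expect property \ref{property:nabla_maxOmega_j_equal_0} to be the only step requiring some care, since one has to argue rigorously about the optimization problem when $\x$ has extended-real entries and make the convention $0 \cdot (-\infty) = 0$ explicit to handle $q_j = 0$. The other four properties are one-line consequences of the variational definition, the constraint structure of the simplex, and the invariance of $\Simplex^D$ under permutations.
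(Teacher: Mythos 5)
Your proposal is correct and follows essentially the same route as the paper: every property is derived directly from the variational definition of $\maxOmega$ over the simplex, using the feasibility of a maximizing vertex for the lower bound in property~\ref{property:maxOmega_bound}, the constraint $\langle \q, \ones\rangle = 1$ for property~\ref{property:maxOmega_distrib}, the permutation change of variables for property~\ref{property:maxOmega_comm}, non-negativity of $\q$ for property~\ref{property:maxOmega_non_decreasing}, and the infeasibility of $q_j>0$ when $x_j=-\infty$ for property~\ref{property:nabla_maxOmega_j_equal_0}. Your added care about the convention $0\cdot(-\infty)=0$ and the finiteness of at least one coordinate is a slight refinement of the paper's one-line argument, but not a different approach.
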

Proofs are given in \S\ref{appendix:proof_lemma_max_Omega_properties}.
In particular, property~\ref{property:maxOmega_comm} holds whenever $\Omega(\q)
= \sum_{i=1}^D \omega(q_i)$, for some function $\omega$.
We focus in this paper on two specific regularizers $\Omega$: the negentropy $-H$ and
the squared $\ell_2$ norm. For these choices, all properties above are satisfied and
we can derive closed-form expressions for $\maxOmega$, its gradient and
its Hessian --- see \S\ref{appendix:omega_examples}.  When using
negentropy, $\maxOmega$ becomes the \textit{log-sum-exp} and $\nabla \maxOmega$ the
\textit{softmax}. The former satisfies
associativity, which as we shall see, makes it natural to use in dynamic
programming.  With the squared $\ell_2$ regularization, as observed
by~\citet{sparsemax,niculae_blondel_2017}, the gradient $\nabla \maxOmega$ is \textit{sparse}. This
will prove useful to enforce sparsity in the models we study.

\section{Differentiable DP layers}
\label{sec:differentiable_dp_layers}

Dynamic programming (DP) is a generic way of solving combinatorial optimization
problems by recursively solving problems on smaller sets.
We first introduce this category of algorithms in a broad setting, then
use smoothed max operators to define differentiable DP layers.

\subsection{Dynamic programming on a DAG}
\label{sec:dp_on_dag}

Every problem solved by dynamic programming reduces to finding
the highest-scoring path between a start node and an end node,
 on a weighted directed acyclic graph (DAG). We therefore introduce our formalism
 on this generic problem, and give concrete examples in \S\ref{sec:examples}.

Formally, let $G=(\cV, \cE)$ be a DAG, with nodes~$\cV$ and
edges $\cE$. We write $N = |\cV| \ge 2$ the number of nodes.
Without loss of generality, we number the nodes in topological order, from~$1$
(start) to $N$ (end), and thus $\cV = [N]$. 
Node~$1$ is the only node without parents, and node~$N$ the only node
without children. 
Every directed edge $(i, j)$
from a parent node $j$ to a child node $i$ has a weight $\theta_{i,j} \in
\RR$. We gather the edge weights in a matrix $\btheta \in \bTheta \subseteq
\RR^{N \times N}$, setting $\theta_{i,j} = - \infty$ if $(i, j) \notin \cE$ and $\theta_{1, 1} = 1$.
We consider the set $\cY$ of all paths in $G$ from node $1$ to node~$N$.
Any path $\Y \in \cY$ can
be represented as a $N \times N$ binary matrix, with $y_{i,j} = 1$ if the path 
goes through the edge $(i, j)$ and $y_{i,j} = 0$ otherwise.
In the sequel, paths will have a one-to-one correspondence with discrete
structures such as sequences or alignments.
Using this representation, $\langle \Y, \btheta \rangle$ corresponds to the
cumulated sum of edge weights, along the path $\Y$. The computation
of the \textit{highest score} among all
paths amounts to solving the combinatorial problem
\begin{equation}
\lp(\btheta) \triangleq \max_{\Y \in \cY} ~ \langle \Y, \btheta \rangle \in
\mathbb{R}.
\label{eq:LP}
\end{equation}

Although the size of $\cY$ is in general exponential in $N$, $\lp(\btheta)$ can be
computed in one topologically-ordered pass over $G$ using \textit{dynamic programming}.
We let $\cP_i$ be the set of parent nodes of node $i$ in
graph $G$ and define recursively
\begin{align}
    v_1(\btheta) &\triangleq 0 \\
    \forall\,i \in [2, \dots, N]: ~
    v_i(\btheta) &\triangleq \max_{j \in \cP_i}
    \theta_{i,j} + v_j(\btheta).
\label{eq:general_recursion}
\end{align}
This algorithm outputs
$\dpz(\btheta) \triangleq v_N(\btheta)$. We now show that this is precisely the
highest score among all paths. 

\begin{proposition}{Optimality of dynamic programming}%

\normalfont
\centering
$\forall \btheta \in \bTheta: \quad    \dpz(\btheta) = \lp(\btheta)$
\label{proposition:recursion}
\end{proposition}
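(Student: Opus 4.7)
The plan is to prove by strong induction on the topological index $i$ the stronger statement that $v_i(\btheta) = \max_{\Y \in \cY_i} \langle \Y, \btheta \rangle$, where $\cY_i \subseteq \{0,1\}^{N \times N}$ denotes the set of (binary-matrix encodings of) paths from node $1$ to node $i$ in $G$. The desired proposition then follows immediately by specializing to $i = N$, since $\cY_N = \cY$ and $\dpz(\btheta) = v_N(\btheta)$.

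For the base case $i=1$, the only path from $1$ to $1$ is the trivial empty path, whose matrix encoding contributes the convention $\theta_{1,1} = 1$ (or, depending on the indexing convention, zero cumulated edge weight), matching $v_1(\btheta) = 0$ up to the agreed start convention. For the inductive step, I fix $i \ge 2$ and assume the claim for every $j < i$. Because the nodes are numbered in topological order, every parent $j \in \cP_i$ satisfies $j < i$, so the inductive hypothesis applies to all parents. Any path $\Y \in \cY_i$ must enter $i$ through exactly one parent $j \in \cP_i$, so it decomposes uniquely as a path $\Y' \in \cY_j$ concatenated with the edge $(i,j)$. This decomposition gives $\langle \Y, \btheta \rangle = \langle \Y', \btheta \rangle + \theta_{i,j}$.

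Taking the maximum over all $\Y \in \cY_i$ then splits as a maximum over the entering parent $j$ and, for each fixed $j$, a maximum over $\Y' \in \cY_j$:
\begin{equation}
\max_{\Y \in \cY_i} \langle \Y, \btheta \rangle
= \max_{j \in \cP_i} \Bigl( \theta_{i,j} + \max_{\Y' \in \cY_j} \langle \Y', \btheta \rangle \Bigr)
= \max_{j \in \cP_i} \bigl( \theta_{i,j} + v_j(\btheta) \bigr) = v_i(\btheta),
\end{equation}
where the middle equality uses the inductive hypothesis and the final equality is the recursion definition.

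The only delicate points are bookkeeping: first, one must ensure that the set $\cP_i$ used in the recursion coincides with the true graph-parents, which is guaranteed by the convention $\theta_{i,j} = -\infty$ for $(i,j) \notin \cE$ together with property \ref{property:nabla_maxOmega_j_equal_0}-style reasoning (a $-\infty$ term can never attain the maximum when some finite candidate exists, and if no path to $i$ exists then both sides are $-\infty$); second, one must check that the path decomposition is a bijection, i.e., every pair $(j, \Y')$ with $j \in \cP_i$ and $\Y' \in \cY_j$ reconstructs a unique element of $\cY_i$, which holds because the added edge $(i,j)$ identifies both the last parent and the prefix. Neither issue is a real obstacle; the argument is essentially the classical Bellman optimality principle transcribed to the DAG path formulation used here.
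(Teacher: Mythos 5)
Your proof is correct and follows essentially the same route as the paper's: both decompose each path into node $i$ according to the parent $j$ through which it enters, then split the maximum into an outer max over $j \in \cP_i$ and an inner max over prefix paths (the paper phrases this as associativity of $\max$ plus distributivity of $+$ over $\max$, you phrase it as an explicit strong induction on the topological index). Your version merely makes the induction and the path-decomposition bijection explicit, which the paper leaves implicit; there is no substantive difference.
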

The optimality of the recursion~\eqref{eq:general_recursion} is a
well-known result~\citep{bellman_1952}.
We prove it again with our formalism in
\S\ref{appendix:proof_proposition_recursion}, since it exhibits the two
key properties that the max operator must satisfy to guarantee optimality:
\textit{distributivity} of $+$ over it and \textit{associativity}.
The cost of computing $\dpOmega(\btheta)$ is $\bigO(|\cE|)$, which is
exponentially better than $\bigO(|\cY|)$.

In many applications, we will often rather be interested in the
\textit{argument} that achieves the maximum, \textit{i.e.}, one of the
highest-scoring paths
\begin{equation}
\Y^\star(\btheta) \in \argmax_{\Y \in \cY} ~ \langle \Y, \btheta \rangle.
\label{eq:lp_argmax}
\end{equation}
This argument can be computed by \textit{backtracking}, that we now relate
to computing subgradients of $\lp(\btheta)$.

\paragraph{Linear program, lack of differentiality.}

Unfortunately, $\lp(\btheta)$ is not differentiable everywhere.  To see why
this is the case, notice that \eqref{eq:LP} can be rewritten as a linear program
over the convex polytope $\conv(\cY)$:
\begin{equation}
    \lp(\btheta) = \max_{\Y \in \conv(\cY)} ~ \langle \Y, \btheta \rangle.
\end{equation}
From the generalized Danskin theorem
\citep{bertsekas1971control},
\begin{equation}
\Y^\star(\btheta) \in 
\argmax_{\Y \in \conv(\cY)} ~ \langle \Y, \btheta \rangle
\subseteq \partial \lp(\btheta),
\end{equation}
where $\partial$ denotes the subdifferential of $\lp(\btheta)$, \textit{i.e.}, the set of
subgradients. When $\Y^\star(\btheta)$ is unique, $\partial \lp(\btheta)$ is a singleton
and $\Y^\star$ is equal to the gradient of $\lp(\btheta)$, that we write $\nabla
\lp(\btheta)$.  Unfortunately, $\Y^\star(\btheta)$ is not always unique, meaning that
$\lp(\btheta)$ is not differentiable everywhere. This hinders optimization
as we can only train models involving $\lp(\btheta)$ with 
\textit{subgradient} methods.
 Worse, $\Y^\star(\btheta)$,
a function from $\bTheta$ to $\cY$, is discontinuous and has null 
or undefined derivatives. It is thus impossible
to use it in a model trained by
gradient~descent.

\subsection{Smoothed max layers}

To address the lack of differentiability of dynamic programming, we introduce
the operator $\maxOmega$, presented in~\S\ref{sec:smoothed_max_op}, and consider two
approaches.

\paragraph{Smoothing the linear program.} 
Let us define the $\Omega$-smoothed maximum of a function $f \colon
\mathbb{\cY} \to \RR$ over a finite set $\cY$ using the following shorthand
notation:
\begin{equation}
\underset{\Y \in \cY}{\maxOmega} ~ f(\Y) \triangleq \maxOmega((f(\Y))_{\Y
\in \cY}).
\end{equation}
A natural way to circumvent the lack of differentiability of $\lp(\btheta)$ is
then to replace the \textit{global} $\max$ operator by $\maxOmega$:
\begin{equation}
    \lpOmega(\btheta) \triangleq \underset{\Y \in \cY}{\maxOmega} ~ \langle \Y,
    \btheta \rangle \in
\mathbb{R}.
\label{eq:LP_Omega}
\end{equation}
From \S\ref{sec:smoothed_max_op}, $\lpOmega(\btheta)$ is convex and, as long as
$\Omega$ is strongly convex, differentiable everywhere. In addition,
$\nabla \lpOmega(\btheta)$ is Lipschitz continuous and thus differentiable
almost everywhere.  Unfortunately, solving \eqref{eq:LP_Omega} for general
strongly convex $\Omega$ is intractable when $\cY$ has an exponential size.

\paragraph{Smoothing the dynamic program.} As a tractable alternative, we propose an 
\textit{algorithmic}
smoothing. Namely, we replace $\max$ by $\maxOmega$ \textit{locally} within the DP
recursion.
Omitting the dependence on $\Omega$, this defines a smoothed recursion over the
new sequence ${(v_i(\btheta))}_{i=1}^N$: 
\begin{gather}
v_1(\btheta) \triangleq 0 \\
\forall i \in [2,\dots, N]: ~
v_i(\btheta)\triangleq \underset{j \in \cP_i}{\maxOmega} ~ \theta_{i,j} +
v_j(\btheta).
\label{eq:smoothed_recursion}
\end{gather}
The new algorithm outputs $\dpOmega(\btheta)\,{\triangleq}\, v_N(\btheta)$, the
\textit{smoothed highest score}.  
Smoothing the max operator locally
brings the same benefit as before ---
$\dpOmega(\btheta)$ is smooth and $\nabla \dpOmega(\btheta)$ is differentiable
almost everywhere. However, computing $\dpOmega(\btheta)$ is now
always tractable, since it simply requires to evaluate ${(v_i(\btheta))}_{i=1}^N$
in topological order, as in the original recursion~\eqref{eq:general_recursion}.
Although $\lpOmega(\btheta)$ and $\dpOmega(\btheta)$ are generally different (in
fact, $\lpOmega(\btheta) \ge \dpOmega(\btheta)$ for all
$\btheta \in \bTheta$), we now show that $\dpOmega(\btheta)$ is a sensible
approximation of $\lp(\btheta)$ in several respects.
\begin{proposition}{Properties of $\dpOmega$}
    \label{proposition:dpOmega}

\begin{enumerate}[topsep=0pt,itemsep=3pt,parsep=3pt]
\item \label{property:dpOmega_convex} $\dpOmega(\btheta)$ is convex
\item \label{property:dpOmega_bound} $\lp(\btheta) - \dpOmega(\btheta)$ is
    bounded above and below:
\begin{equation}
(N-1) L_{\Omega,N} \le 
\lp(\btheta) - \dpOmega(\btheta) \le
(N-1) U_{\Omega,N},
\end{equation}
where $L_{\Omega,N}$ and $U_{\Omega,N}$ are defined in 
Lemma \ref{lemma:max_Omega_properties}.
\item When $\Omega$ is separable, $\dpOmega(\btheta)=\lpOmega(\btheta)$ 
\textbf{if and only if} $\Omega = - \gamma H$, where $\gamma \ge 0$. 
\end{enumerate}
\end{proposition}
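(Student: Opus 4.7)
The plan is to prove the three parts of Proposition~\ref{proposition:dpOmega} separately, with the third being the main obstacle.

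For part~1 (convexity), I would induct on the topological index $i$ to show that each $v_i(\btheta)$ is convex in $\btheta$. The base case $v_1\equiv 0$ is trivial. For the inductive step, $\maxOmega$ is convex as a conjugate of $\Omega$ restricted to the simplex, and it is non-decreasing in each coordinate by Lemma~\ref{lemma:max_Omega_properties}, property~\ref{property:maxOmega_non_decreasing}. Each coordinate $\theta_{i,j}+v_j(\btheta)$ of its argument is convex in $\btheta$ (sum of a linear function and, by the inductive hypothesis, a convex function), so the standard fact that a convex non-decreasing function composed with a vector of convex functions remains convex yields convexity of $v_i(\btheta)$. Taking $i=N$ gives the claim.

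For part~2 (bounds), I would define the error $\epsilon_i(\btheta)\triangleq v_i^{\mathrm{hard}}(\btheta)-v_i(\btheta)$, where $v_i^{\mathrm{hard}}$ denotes the unsmoothed recursion of~\eqref{eq:general_recursion}. Combining Lemma~\ref{lemma:max_Omega_properties}, property~\ref{property:maxOmega_bound}, with the elementary monotonicity bound $\max_j x_j + \min_j e_j \le \max_j(x_j+e_j) \le \max_j x_j + \max_j e_j$, one can show by induction on $i$ in topological order that $L_{\Omega,N}\,d_i \le \epsilon_i(\btheta) \le U_{\Omega,N}\,d_i$, where $d_i$ is the length of the longest path from node $1$ to node $i$. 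Monotonicity of the dimension-indexed bounds $L_{\Omega,D}$ and $U_{\Omega,D}$ (which holds for both negentropy and squared $\ell_2$) lets us uniformly replace the dimension $|\cP_i|$ of each local $\maxOmega$ call by $N$. Since $d_N\le N-1$ and $\lp(\btheta)-\dpOmega(\btheta)=\epsilon_N(\btheta)$, the stated bounds follow.

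The main obstacle is part~3. The key observation is that the equality $\dpOmega(\btheta)=\lpOmega(\btheta)$, required to hold on all DAGs and all $\btheta$, is essentially an \emph{associativity} condition on $\maxOmega$: the nested local applications in $\dpOmega$ must agree with the single flat application over all paths in $\lpOmega$. The ``if'' direction is easy: when $\Omega=-\gamma H$, $\maxOmega(\x)=\gamma\log\sum_i e^{x_i/\gamma}$ is associative, and combined with distributivity of $+$ over $\maxOmega$ (property~\ref{property:maxOmega_distrib}), it allows factoring shared prefixes of path scores so that $\dpOmega(\btheta)=\lpOmega(\btheta)$ on every DAG. For the ``only if'' direction, I would exhibit a small ``diamond-plus-shortcut'' DAG (with paths $1\to 2\to 4\to 5$, $1\to 3\to 4\to 5$ and a direct edge $1\to 5$) for which $\dpOmega(\btheta)=\lpOmega(\btheta)$ reduces, across all $\btheta$, to the functional equation $\maxOmega(\maxOmega(a,b),c)=\maxOmega(a,b,c)$ for all $a,b,c\in\RR$. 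For separable $\Omega(\q)=\sum_i\omega(q_i)$, combining this associativity with translation invariance (property~\ref{property:maxOmega_distrib}) forces $\maxOmega(x,y)=\phi^{-1}(\phi(x)+\phi(y))$ for some continuous, strictly increasing $\phi$ satisfying a Cauchy-type functional equation, which pins $\phi$ down, up to a positive scale $\gamma$, to $\phi(x)=e^{x/\gamma}$. Inverting the conjugacy $\maxOmega\leftrightarrow\Omega$ then identifies $\Omega=-\gamma H$ with $\gamma\ge 0$.
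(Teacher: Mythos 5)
Your arguments for parts~1 and~2 are essentially the paper's: the same induction through the topological order using convexity and coordinatewise monotonicity of $\maxOmega$ for part~1, and the same error-accumulation induction using property~\ref{property:maxOmega_bound} for part~2 (your refinement with the longest-path length $d_i$ and the monotonicity of $L_{\Omega,D}, U_{\Omega,D}$ in $D$ is harmless and yields the same $(N-1)$ factor). The ``if'' half of part~3 also matches the paper: associativity of log-sum-exp plus distributivity lets the optimality proof of Proposition~\ref{proposition:recursion} go through with $\maxOmega$ in place of $\max$. Your reduction of the ``only if'' half to the three-element functional equation $\maxOmega(\maxOmega(a,b),c)=\maxOmega(a,b,c)$ via a small DAG is likewise the paper's strategy (cf.\ Figure~\ref{fig:example}).

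The gap is in how you solve that functional equation. You invoke the representation theorem for associative operations to write $\maxOmega(x,y)=\phi^{-1}(\phi(x)+\phi(y))$, but that theorem (Acz\'el) requires the operation to be cancellative, equivalently strictly increasing in each argument, and you never establish this for $\maxOmega$. It is not a consequence of the hypotheses you have: $\nabla\maxOmega(\x)$ can lie on the boundary of the simplex (as it does for $\ell_2^2$ regularization), in which case $\maxOmega$ is locally constant in some coordinate and no such $\phi$ can exist. The degenerate case $\omega=0$, i.e.\ $\maxOmega=\max$, which the proposition's conclusion explicitly admits as $\gamma=0$, is a concrete counterexample to your representation claim: $\max$ is continuous, symmetric, non-decreasing, translation-invariant and associative, yet is not of the form $\phi^{-1}(\phi(x)+\phi(y))$ for any finite strictly monotone $\phi$. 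So as written the argument proves the wrong statement; to repair it you would need an additional lemma showing that associativity together with separability (and strong convexity, if you want to exclude $\gamma=0$) forces $\nabla\maxOmega$ into the relative interior of the simplex, or a separate treatment of the non-cancellative case. The paper sidesteps this entirely by staying on the dual side: it rewrites $\maxOmega(\maxOmega(x_1,x_2),x_3)$ as the conjugate of an explicit function $\Phi$ on $\Simplex^3$, argues $\Phi=\Omega$ via biconjugation and a surjectivity argument for $\nabla\maxOmega$ on $\Simplex^2$, and lands on the entropy functional equation $\omega(xy)+\omega((1-x)y)-\omega(y)=y(\omega(x)+\omega(1-x))$, whose measurable solutions are $\omega(x)=-\gamma x\log x$ by a result of Horibe. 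Your Cauchy-equation endgame (translation invariance forcing $\phi(x)=e^{x/\gamma}$) is correct once the representation is granted, but granting it is precisely the missing step.
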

Proofs are given in \S\ref{appendix:proof_dpOmega}.  The first claim can
be surprising due to the recursive definition of
$\dpOmega(\btheta)$.  The second claim implies that $\dpz_{\gamma
\Omega}(\btheta)$ converges to $\lp(\btheta)$ when the regularization vanishes:
$
    \dpz_{\gamma \Omega}(\btheta) \to_{\gamma \to 0} \lp(\btheta)
$;
$\lp_{\gamma \Omega}(\btheta)$ also satisfies this property.  The ``if''
direction of the third claim follows by showing that $\max_{-\gamma H}$
satisfies associativity. This recovers known results in the framework of message
passing algorithms for probabilistic graphical
models~\citep[\textit{e.g.,}][Section 4.1.3]{wainwright_graphical_2008}, with a
more algebraic point of view.  The key role that the distributive and
associative properties play into breaking down large problems into smaller ones
has long been noted \citep{verdu1987,aji_2000}.  However, the ``and only if''
part of the claim is new to our knowledge. Its proof shows that
$\max_{-\gamma H}$ is the only
$\maxOmega$ satisfying associativity, exhibiting a functional equation
from information theory~\citep{horibe_entropy_1988}.
While this provides an argument
in favor of entropic regularization, $\ell_2^2$ regularization has
different benefits in terms of sparsity of the solutions.

\subsection{Relaxed argmax layers}\label{sec:differentiation}

It is easy to check that $\nabla \lpOmega(\btheta)$ belongs to $\conv(\cY)$
and can be interpreted as an expected path under some distribution
induced by $\nabla \maxOmega$,
over all possible $\Y \in \cY$ 
--- see \S\ref{appendix:nabla_lp_omega} for details.
This makes $\nabla \lpOmega(\btheta)$ interpretable as a \textit{continuous
relaxation} of the highest-scoring path $\Y^\star(\btheta)$ defined in
\eqref{eq:lp_argmax}. However, like $\lpOmega(\btheta)$, computing $\nabla
\lpOmega(\btheta)$ is intractable in the general case.
Fortunately, we now show that $\nabla
\dpOmega(\btheta)$ is always easily computable by \textit{backpropagation} and
enjoys similar properties.

\paragraph{Computing $\nabla \dpOmega(\btheta)$.}

Computing $\nabla \dpOmega(\btheta)$ can be broken down into two steps.
First, we compute and record the local gradients alongside the recursive step
\eqref{eq:smoothed_recursion}:
\begin{equation}
    \forall\,i \in [N]:\quad\q_i(\btheta) \triangleq \nabla \maxOmega(\btheta_i
    + \v(\btheta)) \in \Simplex^N,
\end{equation}
where $\v(\btheta) \triangleq (v_1(\btheta),\dots,v_N(\btheta))$.
Since we assume that $\theta_{i,j}=-\infty$ if $(i,j) \not \in \cE$, we have
$\supp(\q_i(\btheta)) = \cP_i$.
This ensures that, similarly to~$v_i(\btheta)$, $\q_i(\btheta)$ exclusively
depends on $(v_j(\btheta))_{j \in \cP_i}$. 
Let $\cC_j$ be the children of node $j \in [N]$.
A straighforward application of backpropagation (cf.
\S\ref{appendix:proof_proposition_nabla_dpOmega}) yields a recursion run in
\textit{reverse-topological} order, starting from node $j=N-1$ down to $j=1$:
\begin{align}
\forall\,i \in \cC_j:\: 
e_{i,j} \leftarrow \bar e_i q_{i,j} \text{ then }
\bar e_j \leftarrow \sum_{i \in \cC_j} e_{i,j},\label{eq:gradient_recursion}
\end{align}
where $\bar e_N \leftarrow 1$ and $e_{i,j} \leftarrow 0$ for $(i, j) \notin
\cE$.  The final output is $\nabla \dpOmega(\btheta) = \E$.  Assuming
$\maxOmega$ can be computed in linear time, the total cost is $\bigO(|\cE|)$,
the same as $\dpz(\btheta)$.  Pseudo-code is summarized in 
\S\ref{appendix:proof_proposition_nabla_dpOmega}.

\paragraph{Associated path distribution.}
The backpropagation we derived has a probabilistic interpretation. Indeed, $\Q(\btheta) \in \RR^{N\times N}$ can be
interpreted as a transition matrix: it defines a \textit{random walk} on the
graph $G$, \textit{i.e.}, a finite Markov chain with states $\cV$ and
transition probabilities supported by $\cE$. The random walk starts
from node $N$ and, when at node~$i$, hops to node~$j \in \cP_i$ with probability
$q_{i, j}$. It always ends at node $1$, which
is absorbing. The walk follows the path $\Y \in \cY$ with a probability
$\proba{\Y}{\btheta,\Omega}$, which is simply
the product of the $q_{i,j}$ of visited edges. 
Thus, $\Q(\btheta)$ defines a \textit{path distribution} $p_{\btheta,\Omega}$.
Our next proposition shows that $\nabla \dpOmega(\Y) \in \conv(\cY)$ and
is equal to the expected path $\expect{\Y}{\btheta,\Omega}$ under that distribution.
\vspace{.3em}
\begin{proposition}{$\nabla \dpOmega(\btheta)$ as an expected path}
    \label{proposition:nabla_dpOmega}%
\begin{equation}
    \normalfont%
        \forall\,\btheta \in \bTheta:\quad\nabla \dpz_\Omega(\btheta) = 
        \expect{\Y}{\btheta,\Omega} = \E \in \conv(\cY).
\end{equation}
\end{proposition}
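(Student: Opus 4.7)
The plan is to establish three facts in sequence: (i) the matrix $\E$ produced by the backpropagation recursion~\eqref{eq:gradient_recursion} equals $\nabla \dpz_\Omega(\btheta)$; (ii) the auxiliary quantities $\bar e_j$ and $e_{i,j}$ admit a probabilistic interpretation as node- and edge-visitation probabilities of the random walk driven by $\Q(\btheta)$; and (iii) combining these yields $\E = \expect{\Y}{\btheta,\Omega}$, which lies in $\conv(\cY)$ by construction.

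For (i), I would observe that $\dpz_\Omega(\btheta) = v_N(\btheta)$ is a composition of the smooth maps~\eqref{eq:smoothed_recursion} arranged along the DAG. By Danskin's theorem applied to each $\maxOmega$, the local partials satisfy $\partial v_i / \partial \theta_{i,j} = q_{i,j}(\btheta)$ and $\partial v_i / \partial v_j = q_{i,j}(\btheta)$ for $j \in \cP_i$, and both vanish for $j \notin \cP_i$ thanks to the insensitivity-to-$-\infty$ property in Lemma~\ref{lemma:max_Omega_properties} (since $\theta_{i,j} = -\infty$ outside $\cE$). Setting $\bar e_j \triangleq \partial v_N / \partial v_j$, the multivariate chain rule applied in reverse topological order reproduces exactly the update~\eqref{eq:gradient_recursion}; hence $e_{i,j} = \partial v_N / \partial \theta_{i,j}$, i.e.\ $\E = \nabla \dpz_\Omega(\btheta)$.

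For (ii), I would prove by reverse-topological induction that $\bar e_j$ equals the probability that the Markov chain, started at $N$ with transition matrix $\Q(\btheta)$, visits node $j$, and consequently that $e_{i,j} = \bar e_i q_{i,j}$ equals the probability that it traverses edge $(i,j)$. The base case $\bar e_N = 1$ is immediate. For the inductive step, the support of $\q_i(\btheta)$ is exactly $\cP_i$, so the only way the walk can reach $j$ is to first reach some $i \in \cC_j$ and then hop from $i$ to $j$; because $G$ is a DAG with topological numbering, the walk visits each $i$ at most once per realisation, making the events $\{\text{reach }i\text{ and then hop to }j\}$ mutually exclusive across $i \in \cC_j$. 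Summing them with the Markov property and the inductive hypothesis yields the recursion $\bar e_j = \sum_{i \in \cC_j} \bar e_i q_{i,j}$, matching~\eqref{eq:gradient_recursion}.

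For (iii), marginalising over paths gives
\[
e_{i,j} = \sum_{\Y \in \cY} \proba{\Y}{\btheta,\Omega}\, y_{i,j} = \expect{y_{i,j}}{\btheta,\Omega},
\]
since the walk traverses edge $(i,j)$ if and only if the realised path $\Y$ has $y_{i,j}=1$. Stacking entries, $\E = \expect{\Y}{\btheta,\Omega} = \sum_{\Y \in \cY} \proba{\Y}{\btheta,\Omega}\, \Y$, which is a convex combination of elements of $\cY$, so $\E \in \conv(\cY)$. The main delicate point is step~(ii): one must carefully justify that the walk terminates in finitely many steps at the absorbing node $1$ (immediate from the DAG structure and from $\supp(\q_i) = \cP_i$) and that the decomposition over children is simultaneously exhaustive and disjoint, so that visitation probabilities combine additively without any inclusion--exclusion correction.
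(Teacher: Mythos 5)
Your proposal is correct and follows essentially the same route as the paper: identify $\E$ with $\nabla \dpz_\Omega(\btheta)$ by the chain rule on the DAG, interpret $\bar e_j$ and $e_{i,j}$ as node- and edge-visitation probabilities of the random walk induced by $\Q(\btheta)$ (the paper phrases your induction as "the expectations satisfy the same recursion with the same initialization"), and conclude $\E = \expect{\Y}{\btheta,\Omega} \in \conv(\cY)$ by linearity. The delicate points you flag --- finite termination at the absorbing node and the disjoint, exhaustive decomposition over children ensured by single visitation on a DAG --- are exactly the ones the paper's argument relies on.
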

Proof is provided in \S\ref{appendix:proof_proposition_nabla_dpOmega}.
Moreover, $\nabla \dpOmega(\btheta)$ 
is a principled relaxation of the
highest-scoring path $\Y^\star(\btheta)$,
in the sense that it converges to a subgradient of
$\lp(\btheta)$ as the regularization vanishes:
\begin{equation}
\forall\,\btheta \in \bTheta:\quad
\nabla \dpz_{\gamma \Omega}(\btheta)
\xrightarrow[\gamma \to 0]{}
\Y^\star(\btheta) \in \partial \lp(\btheta).
\end{equation}
When $\Omega=-\gamma H$, the distributions underpinning $\lpOmega(\btheta)$ and
$\dpOmega(\btheta)$ coincide and reduce to the Gibbs distribution
$\proba{\Y}{\btheta,\Omega} \propto \exp(\langle \btheta, \Y \rangle / \gamma)$.
The value $\lpOmega(\btheta)=\dpOmega(\btheta)$ is then equal to the log
partition.  When $\Omega=\gamma\|\cdot\|^2$,
some transitions between nodes have zero probability and hence some paths have
zero probability under the distribution $p_{\btheta,\Omega}$.  Thus, $\nabla
\dpOmega(\btheta)$ is typically \textit{sparse} --- this will prove interesting
to introspect the various models we consider (typically, the smaller $\gamma$,
the sparser $\nabla \dpOmega(\btheta)$).

\subsection[Multiplication with the Hessian]{Multiplication with the Hessian $\nabla^2 \dpOmega(\btheta) \Z$}
\label{sec:hessian}

Using $\nabla \dpOmega(\btheta)$ as a layer involves
backpropagating through $\nabla\dpOmega(\btheta)$. This requires computing the
Jacobian $\nabla \nabla \dpOmega(\btheta)$ or in other words the Hessian
$\nabla^2 \dpOmega(\btheta)$, a linear map from $\RR^{N \times N}$ to $\RR^{N
\times N}$. Fortunately, a practical implementation of backpropagation only requires to apply that map to
a provided matrix~$\Z  \in \RR^{N \times N}$, \ie,
$\nabla^2\dpOmega(\btheta)\Z$. 
We therefore focus on that term.  Recall that the
directional derivative of $\dpOmega$ at $\btheta$ along $\Z$ can be computed by
$\langle \nabla \dpOmega(\btheta), \Z \rangle \in \RR$.
Our key technique, which is also at the heart of Pearlmutter's method
(\citeyear{pearlmutter_fast_1994}),
is to observe that $\nabla^2\dpOmega(\btheta)\Z$ is the
gradient of the directional derivative at $\btheta$ along $\Z$.  Namely,
\begin{equation}
\nabla^2 \dpOmega(\btheta) \Z = \nabla \langle \nabla \dpOmega(\btheta), \Z
\rangle.
\end{equation}
We therefore break down the computation of $\nabla^2 \dpOmega(\btheta) \Z$ into
two steps. First, we compute the directional derivative  $\langle \nabla
\dpOmega(\btheta), \Z \rangle$ using the chain rule. It can be computed in one
topologically-ordered pass over $G$. Similarly to the gradient computation, we
record multiplications with the (generalized) \textit{local} Hessian
$\H_i(\btheta) \triangleq \nabla^2 \maxOmega(\btheta_i + \v(\btheta))$ along the
way. Second, we compute the gradient of the directional derivative using
backpropagation. It yields a recursion for computing $\nabla^2 \dpOmega(\btheta)
\Z$ in reverse topological-order over~$G$. The complete derivation and the
pseudo-code are given in \S\ref{appendix:hessian_vector_product}.  The total
computational cost is $\bigO(|\cE|)$, as for the gradient computation.

 \paragraph{Performance.} 
 
 Using autodiff frameworks such as \textit{PyTorch}~\citep{paszke2017pytorch}, it is
 possible to only implement $\dpOmega(\btheta)$ and rely on
 tape-based gradient computation to obtain $\nabla \dpOmega(\btheta)$. Provided
 that we tape the backward pass as well, we can then backpropagate again through
 $\nabla \dpOmega(\btheta)$ to obtain $\nabla^2 \dpOmega(\btheta)\Z$. In
 practice, however, implementing backpropagation without resorting to
 autodiff software is crucial, since the DAG structure can be directly harcoded in
 concrete cases --- see \S\ref{sec:examples}.  Moreover, our
 ``reverse-over-forward'' approach to compute the Hessian product
 (backpropagating over the directional derivative computation) yields a simpler
 computation graph than the ``reverse-over-reverse'' approach (backpropagation
 over taped backpropagation).
In experiments, our approach 
 is up to $50 \times $ faster than vanilla \textit{PyTorch} on the Viterbi DAG.
 Note that our algorithms are readily vectorizable and can efficiently handle mini-batches with
  varying input lengths.

\paragraph{Summary.}

We have proposed $\dpOmega(\btheta)$, a smooth, convex and tractable
relaxation to the \textit{value} of $\lp(\btheta)$. We have also shown that $\nabla
\dpOmega(\btheta)$ belongs to $\conv(\cY)$ and is therefore a
sound relaxation to \textit{solutions} of $\lp(\btheta)$. To conclude this
section, we formally define our proposed two layers.
\begin{definition}{Differentiable dynamic programming layers}%
\begin{align}
    \text{Value layer:}&\quad\dpOmega(\btheta) \in \RR \\        
    \text{Gradient layer:}&\quad \nabla \dpOmega(\btheta) \in \conv(\cY)
\end{align}
\end{definition}

\section{Examples of computational graphs}
\label{sec:examples}

We now illustrate two instantiations of our framework for specific
computational graphs.

\subsection{Sequence prediction}\label{sec:seq_pred}

We demonstrate in this section how to instantiate $\dpOmega$ to the
computational graph of the
Viterbi algorithm~\citep{viterbi_error_1967,rabiner}, one of the most famous
instances of DP algorithm. We call the resulting operator $\viterbiOmega$.
We wish to tag a sequence $\X = (\x_1, \dots, \x_T)$ of vectors in $\RR^D$
(\textit{e.g.,} word representations) with the most probable output sequence (\textit{e.g.,}
entity tags) $\y = (y_1, \dots, y_T) \in [S]^T$.
This problem can be cast
as finding the highest-scoring path on a \textit{treillis} $G$.
While $\y$ can always be represented as a sparse $N \times N$ binary matrix,
it is convenient to represent it instead
as a $T \times S \times S$ binary tensor $\Y$, such that $y_{t,i,j} = 1$ if $\y$
transitions from node $j$ to node $i$ on time $t$, and~$0$ otherwise  --- we set $y_0 = 1$.
The potentials can similarly be organized as a $T \times S \times S$ real
tensor, such that $\theta_{t,i,j} = \phi_t(\x_t, i, j)$. 
Traditionally, the potential functions $\phi_t$ were
human-engineered \citep[\S 2.5]{sutton_introduction_2011}. In recent works
and in this paper,
they are learned end-to-end \citep{lample_2016}.

Using the above binary tensor representation, the inner product $\langle \Y,
\btheta \rangle$ is equal to $\sum_{t=1}^{T} \phi_t(\x_t, y_{t}, y_{t-1})$,
$\y$'s cumulated score.  This is illustrated in
Figure~\ref{fig:viterbi} on the task of part-of-speech tagging.
The bold arrows indicate one possible output sequence $\y$, \textit{i.e.},
one possible path in $G$.
 \begin{figure}[ht]
    \centering
    \includegraphics[height=0.2\textheight]{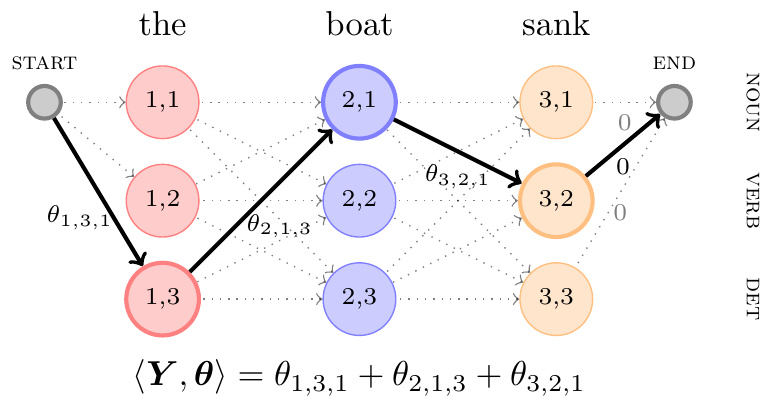}
    \caption{Computational graph of the Viterbi algorithm.}\label{fig:viterbi}
\end{figure}

When $\Omega = -H$, we recover linear-chain conditional random
fields (CRFs) \citep{lafferty_crf} and the probability of $\y$
($\Y$ in tensor representation) given $\X$ is
\begin{equation}
\proba{\y|\X}{\btheta,-H} 
{\propto} \exp(\langle \Y, \btheta \rangle)
{=}\exp\big(\sum_{t=1}^{T} \phi_t(\x_t, y_{t}, y_{t-1})\big).
\end{equation}
From Prop.~\ref{proposition:nabla_dpOmega}, the gradient  $\nabla
\text{Vit}_{-H}(\btheta) = \E \in~\RR^{T {\times} S {\times} S}$ is such that
$e_{t,i,j} = \proba{y_t=i,y_{t-1}=j|\X}{\btheta,-H}$.  The marginal probability
of state $i$ at time $t$
is simply $\proba{y_t=i|\X}{\btheta,-H} = \sum_{j=1}^S e_{t,i,j}$.  Using a
different $\Omega$ simply changes the distribution over state transitions. When
$\Omega=\|\cdot\|^2$, the marginal probabilities are typically \textit{sparse}.
Pseudo-code for $\viterbiOmega(\btheta)$,
as well as gradient and Hessian-product computations, 
is provided in~\S\ref{appendix:viterbi}.  The case $\Omega=\|\cdot\|^2$ is new
to our knowledge.

When $\Omega=-H$, the marginal probabilities are traditionally computed using
the forward-backward algorithm \citep{baum_1966}. In contrast, we compute
$\nabla \text{Vit}_{-H}(\btheta)$ using backpropagation while efficiently maintaining
the marginalization. An advantage of our
approach is that all operations are numerically stable.
The relation between forward-backward and backpropagation has been noted
before (\textit{e.g.,} \citet{eisner_inside-outside_2016}).
However, the analysis is led using $(+,\times)$ operations, instead
of $(\maxOmega,+)$ as we do.
This Viterbi instantiation can
immediately be generalized to graphical models with a tree structure, and to
approximate inference in general graphical models, since unrolled loopy belief
propagation~\citep{pearl_probabilistic_2014}
yields a dynamic~program.

\subsection{Time-series alignment}\label{sec:dtw}

We now demonstrate how to instantiate $\dpOmega$ to the computational graph
of dynamic time warping (DTW)
\citep{Sakoe78}, whose goal is to seek the \textit{minimal} cost alignment
between two time-series. We call the resulting operator $\dtwOmega$. Formally, let $N_A$ and
$N_B$ be the lengths of two time-series, $\A$ and $\B$. Let $\a_i$ and $\b_j$ be
the $i^{\text{th}}$ and $j^\text{th}$ observations of $\A$ and $\B$,
respectively.  Since edge weights only depend on child nodes, it is convenient
to rearrange $\Y$ and $\btheta$ as $N_A \times N_B$ matrices.  Namely, we
represent an alignment $\Y$ as a $N_A \times N_B$ binary matrix, such that
$y_{i,j}=1$ if $\a_i$ is aligned with $\b_j$, and $0$ otherwise. Likewise, we
represent $\btheta$ as a $N_A \times N_B$ matrix. A classical example is
$\theta_{i,j} = d(\a_i, \b_j)$, for some differentiable discrepancy measure~$d$.
We write $\cY$ the set of all monotonic alignment matrices, such that the path
that connects the upper-left $(1,1)$ matrix entry to the lower-right $(N_A,N_B)$
one uses only $\downarrow,\rightarrow,\searrow$ moves. 
The DAG associated with
$\cY$ is illustrated in Figure \ref{fig:dtw_graph} with $N_A=4$ and $N_B=3$ below.
\begin{figure}[ht]
\centering
\includegraphics[height=0.25\textheight]{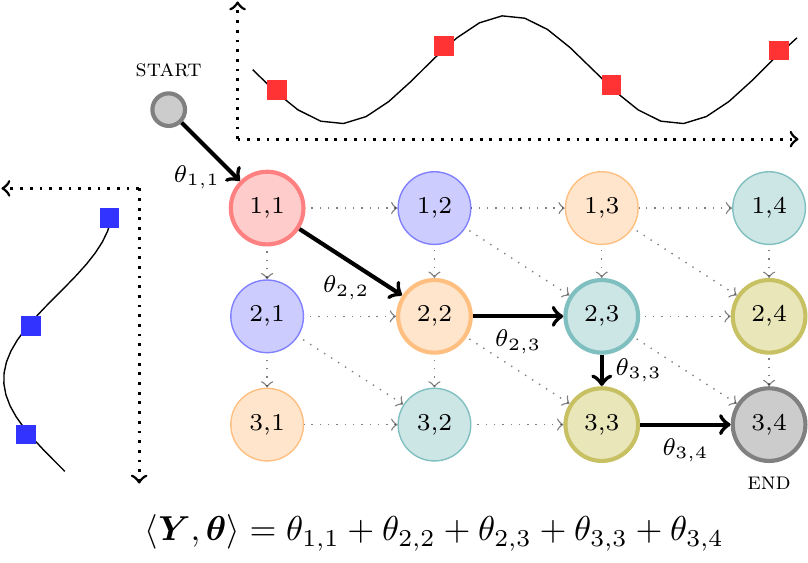}
\caption{Computational graph of the DTW algorithm.}
\label{fig:dtw_graph}
\end{figure}

Again, the bold arrows indicate one possible path $\Y \in \cY$ from start to end
in the DAG, and correspond to one possible alignment.  Using this
representation, the cost of an alignment (cumulated cost along the path) is
conveniently computed by $\langle \Y, \btheta \rangle$.
The value $\dtwOmega(\btheta)$ can be used to define a loss between
alignments or between time-series.
Following Proposition \ref{proposition:nabla_dpOmega}, $\nabla \dtwOmega(\btheta)
= \E \in \RR^{N_A \times N_B}$ can be understood as a soft alignment matrix.
This matrix is sparse when $\Omega=\|\cdot\|^2$, as
illustrated in Figure \ref{fig:smooth_dtw} (right).

Pseudo-code to compute $\dtwOmega(\btheta)$ as well as its
gradient and its Hessian products are provided
in~\S\ref{appendix:dtw}. When $\Omega=-H$, $\dtwOmega(\btheta)$ is a conditional random
field known as soft-DTW, and the
 probability $\proba{\Y|\A,\B}{\btheta,\Omega}$ is a Gibbs distribution similar to \S\ref{sec:seq_pred}
~\citep{soft_dtw}.  However, the case $\Omega=\|\cdot\|^2$ and the computation
of $\nabla^2 \dtwOmega(\btheta) \Z$ are new and allow new applications.

\section{Differentiable structured prediction}
\label{sec:structured_prediction}

We now apply the proposed layers, $\dpOmega(\btheta)$ and $\nabla
\dpOmega(\btheta)$, to structured prediction \citep{bakir_2007}, whose goal is
to predict a structured output $\Y \in \cY$ associated with a structured input
$\X \in \cX$.  We define old and new structured losses, and demonstrate them on
two structured prediction tasks: named entity recognition and
time-series alignment.

\subsection{Structured loss functions}

Throughout this section, we assume that the potentials $\btheta \in~\bTheta$ have already been computed using a function from $\cX$ to $\bTheta$ 
and let $C \colon \cY \times \cY \to \RR_+$ be a cost function between
the ground-truth output $\Ytrue$ and the predicted output $\Y$.

\paragraph{Convex losses.}

Because $C$ is typically non-convex, the cost-augmented structured hinge loss
\citep{structured_hinge} is often used instead for linear models
\begin{equation}
\ell_C(\Ytrue; \btheta) \triangleq
\max_{\Y \in \cY} ~ C(\Ytrue, \Y) + \langle \Y, \btheta \rangle - 
\langle \Ytrue, \btheta \rangle.
\label{eq:structured_hinge_loss}
\end{equation}
This is a convex upper-bound on $C(\Ytrue, \Y^\star(\btheta))$, where
$\Y^\star(\btheta)$ is defined in \eqref{eq:lp_argmax}.
To make the cost-augmented decoding tractable, it is usually assumed that
$C(\Ytrue, \Y)$ is linear in $\Y$, \ie, it can be written as
$\langle \C_\Ytrue, \Y \rangle$ for some matrix $\C_\Ytrue$. We can then rewrite
\eqref{eq:structured_hinge_loss} using our notation as
\begin{equation}
\ell_C(\Ytrue; \btheta) = \lp(\btheta + \C_\Ytrue) - \langle \Ytrue, \btheta \rangle.
\end{equation}
However, this loss function is non-differentiable. We therefore propose to
relax $\lp$ by substituting it with $\dpOmega$:
\begin{equation}
\ell_{C,\Omega}(\Ytrue; \btheta) \triangleq 
\dpOmega(\btheta + \C_\Ytrue) - \langle \Ytrue, \btheta \rangle.
\label{eq:ell_C_Omega}
\end{equation}
Losses in this class are convex, smooth, tractable for any $\Omega$, and by
Proposition \ref{proposition:dpOmega} property \ref{property:dpOmega_bound} a
sensible approximation of $\ell_C$.  In addition, they only require to
backpropagate through $\dpOmega(\btheta)$ at training time.
It is easy to check that we recover 
the structured hinge loss with
$\ell_{C,0}$ \citep{structured_hinge} and the CRF loss with
$\ell_{0,-H}$ \citep{lafferty_crf}. 
The last one has been used on top of LSTMs in several
recent works \citep{lample_2016,ma_2016}. Minimizing $\ell_{0,-H}(\btheta)$ is
equivalent to maximizing the likelihood $\proba{\Ytrue}{\btheta,-H}$.  However,
minimizing $\ell_{0,\|\cdot\|^2}$ is \textit{not} equivalent to maximizing
$\proba{\Ytrue}{\btheta,\|\cdot\|^2}$. In fact, the former is convex while the
latter is not.

\paragraph{Non-convex losses.}

A direct approach that uses the output distribution $\p_{\theta,\Omega}$
consists in minimizing the risk $\sum_{\y \in
\cY} \proba{\Y}{\btheta,-H} C(\Ytrue, \Y)$. As shown by
\citet{minimum_risk_training_stoyanov}, this can be achieved
by backpropagating through the minimum risk decoder. 
However, the risk
is usually non-differentiable, piecewise constant
\citep{minimum_risk_annealing} and several smoothing heuristics
are necessary to make the method work \citep{minimum_risk_training_stoyanov}.

Another principled approach is to consider a differentiable approximation  
$\Delta \colon \cY \times \conv(\cY) \to \RR_+$ of the cost $C$.
We can then relax $C(\Ytrue,
\Y^\star(\btheta))$ by $\Delta(\Ytrue, \nabla \dpOmega(\btheta))$.  
Unlike minimum risk training, this approach is differentiable everywhere when
$\Omega=-H$. Both approaches require to backpropagate through $\nabla
\dpOmega(\btheta)$, which is
only roughly twice as costly as backpropagating through $\dpOmega(\btheta)$ using
the approach outlined in~\S\ref{sec:hessian}.

\subsection{Named entity recognition}\label{sec:ner}

\label{sec:modeling_potentials}

Let $\X = (\x_1, \cdots, \x_T)$ be an input sentence, where each
word $\x_t$ is represented by a vector in $\RR^D$, computed using a
neural recurrent architecture trained end-to-end. We wish
to tag each word with named entities, \textit{i.e.}, identify blocks
of words that correspond to names, locations, dates, etc. We use the specialized operator
$\viterbiOmega$ described in \S\ref{sec:seq_pred}. In our experiments, 
we define the elements of the potential tensor
$\btheta(\X) \in \RR^{T \times S \times S}$ when $t > 1$ by
\begin{equation}\label{eq:linear_potential}
\theta(\X)_{t,i,j} \triangleq \w_i^\top \x_t + b_i + t_{i,j}
\end{equation}
and $\theta(\X)_{1, i, j} \triangleq \w_i^\top \x_t + b_i$, 
where $(\w_i,b_i) \in \RR^D \times \RR$ is the linear classifier associated with
tag $i$ and $\T \in \RR^{S \times S}$ is a transition matrix.
We learn $\W$, $\b$ and $\T$ along with the network producing $\X$,
and compare two losses:
\begin{equation}
\begin{aligned}
&\text{Surrogate convex loss:} &&\ell_{0,\Omega}(\Ytrue; \btheta), \\
&\text{Relaxed loss:} &&\Delta(\Ytrue, \nabla \dpOmega(\btheta)),
\end{aligned}
\end{equation}
where $\Delta(\Ytrue, \Y)$ is the squared $\ell_2$ distance when $\Omega = \Vert \cdot
\Vert_2^2$ and the Kullback-Leibler divergence when $\Omega = -H$, applied
row-wise to the marginalization of $\Ytrue$ and $\Y$.

\paragraph{Experiments.}

We measure the performance of the different losses and regularizations on the
four languages of the CoNLL 2003~\citep{conll} dataset. Following~\citet{lample_2016},
who use the $\ell_{0, -H}$ loss,
we use a character LSTM
and pretrained embeddings computed using \textit{FastText}~\citep{joulin2016fasttext} on
Wikipedia. Those are fed to a word bidirectional LSTM to
obtain $\X$. Architecture details are provided in~\S\ref{appendix:ner}.
Results are reported in Table \ref{table:ner_results}, along with \citep{lample_2016}
results with different pretrained embeddings. With proper parameter
selections, all losses perform within $1\%$ $F_1$-score of each other, although
entropy-regularized
losses perform slightly better on \nicefrac{3}{4} languages. However, the
$\ell_2^2$-regularized losses
yield sparse predictions, whereas entropy regularization always yields dense probability vectors.
Qualitatively, this allows to identify ambiguous predictions more easily ---
this is illustrated in \S\ref{appendix:ner} with additional figures.

\begin{table}[t]
    \caption{$F_1$ score comparison on CoNLL03 NER datasets.}\label{table:ner_results}
    \vspace{-.3em}
    \begin{center}
    \begin{small}
        \begin{tabular}{llrrrr}
            \toprule
            $\Omega$ &      Loss &  English &  Spanish &  German &  Dutch \\
            \midrule
            { Negentropy} &  { Surrogate} &            90.80 &    \textbf{86.68} &   77.35 &  \textbf{87.56} \\
                 &       { Relaxed}                    &        90.47 &    86.20 &   \textbf{77.56} &  87.37 \\
                 \midrule
            { $\ell_2^2$}    &  { Surrogate}   &    \textbf{90.86} &	85.51 & 76.01 & 86.58 \\
                             &       { Relaxed} &                   89.49 &    84.07 &   76.91 &  85.90 \\
                             \midrule
            \multicolumn{2}{c}{ \citep{lample_2016}} & \textit{90.96} & \textit{85.75} & \textit{78.76} & \textit{81.74} \\
            \bottomrule
            \end{tabular}
    \end{small}
    \end{center}        
    \end{table}

\subsection{Supervised audio-to-score transcription}\label{sec:audio}

We use our framework to perform supervised audio-to-score alignment on the
Bach 10 dataset \citep{duan2011soundprism}.
The dataset consists of 10 music pieces with audio tracks, MIDI transcriptions,
and annotated
alignments between them. We transform the audio tracks into a sequence of
audio frames using a feature extractor (see \S\ref{app:audio}) to
obtain a sequence $\A \in \RR^{N_A \times D}$, while the associated score
sequence is represented by $\B \in \RR^{N_B \times K}$ (each row $\b_j$ is a
one-hot vector corresponding to one key $b_j$). Each pair $(\A, \B)$ is associated to
an
alignment $\Ytrue \in \RR^{N_A \times N_B}$. As described in \S\ref{sec:dtw}, we
need to define a discrepancy matrix $\btheta \in \RR^{N_A \times N_B}$ between the
elements of the two sequences. 
We set the cost between an audio frame and a key
to be the log-likelihood of this key given a multinomial linear classifier. For all $i \in [N_A]$, we define
\begin{gather}\label{eq:classic_metric}
    \l_i \triangleq - \log(\softmax(\W^\top \a_i + \bm c)) \in \RR^K, \\
    \text{and }\forall\, j \in [N_B],\, \theta_{i, j}\triangleq l_{i, b_j},
\end{gather}
where $(\W,\c) \in \RR^{D \times K} \times \RR^K$ are learned classifier
parameters. We predict a soft alignment by $\Y = \nabla
\text{DTW}_{-H}(\btheta)$. Following
\citep{garreau_metric_2014}, we define the relaxed loss
\begin{equation}
    \Delta(\Ytrue, \Y) \triangleq \Vert \L (\Y - \Ytrue)^\top \Vert_F^2,
\end{equation}
where $\L$ a the lower triangular matrix filled with~$1$. When $\Y
\in \cY$ is a true
alignement matrix, $\Delta(\Ytrue, \Y)$ is the area between the path of $\Ytrue$
and $\Y$, which corresponds to the \textit{mean absolute deviation}
in the audio literature.
When $\Y \in \conv(\cY)$, it is a convex relaxation of the 
area. At test time, once $\btheta$ is learned, we use the non-regularized DTW
algorithm to output a hard alignment~$\Y^\star(\btheta) \in \cY$.

\paragraph{Results.}

We perform a
leave-one-out cross-validation of our model performance, learning the
multinomial classifier on $9$ pieces and assessing the quality of the alignment
on the remaining piece. We report the mean absolute deviation on both train and
test sets. A solid baseline consists in learning the multinomial classifier
$(\W,\c)$ beforehand, \textit{i.e.}, without end-to-end training.  We then use
this model to compute $\btheta$ as in~\eqref{eq:classic_metric} and obtain
$\Y^\star(\btheta)$. As shown in Table \ref{table:dtw}, our end-to-end technique
outperforms this baseline by a large margin. 
We also
demonstrate in
\S\ref{app:audio} that the alignments obtained by end-to-end training are
visibly closer to the ground truth.
End-to-end training thus allows to
\textit{fine-tune} the distance matrix $\btheta$ for the alignment task at
hand. 

\begin{table}[t]
    \vspace{-.8em}
    \caption{Mean absolute deviation of alignment using an end-to-end trained
        multinomial classifier and a pre-trained one.}\label{table:dtw}
    \vspace{.1em}
    \begin{center}
    \begin{small}
    \begin{tabular}{lcc}
        \toprule
        Linear model & Train & Test \\
        \midrule
        \textbf{End-to-end trained} & $\mathbf{0.17 \pm  0.01}$ & $\mathbf{1.07 \pm 0.61}$ \\
        Pretrained &  $1.80 \pm 0.14$ & $3.69 \pm  2.85$ \\
        Random $\btheta$ & $14.64 \pm 2.63$ & $14.64 \pm 0.29$ \\
        \bottomrule
        \end{tabular}
    \end{small}
    \end{center}
\end{table}

\section{Structured and sparse attention}
\label{sec:structured_attention}

We show in this section how to apply our framework to neural
sequence-to-sequence models augmented with an attention
mechanism~\citep{bahdanau_neural_2014}. An encoder first produces a list of
vectors $\X = (\x_1, \dots, \x_T)$ representing the input sequence. A decoder is
then used to greedily produce the corresponding output sequence. To simplify the
notation, we focus on one time step of the decoding procedure. Given the
decoder's current hidden state $\z$ and $\X$ as inputs, the role of the
attention mechanism is to produce a distribution $\w \in \Simplex^T$ over $\X$,
for the current time step. This distribution is then typically used to produce a
context vector $\c \triangleq \X^\top \w$, that is in turn invoved in the computation
of the output sequence's next element. 

\paragraph{Structured attention layers.}

\citet{kim_structured_2017} proposed a segmentation attention layer,
which is capable of taking into account the transitions between elements of
$\X$. They use a linear-chain CRF to model the probability
$\proba{\y|\X}{\btheta,-H}$ of a sequence $\y = (y_1,\dots,y_T)$,
where each $y_t$ is either $1$ (``pay attention'')
or $0$. They then propose to use normalized marginal probabilities as attention
weights: $w_t \propto \proba{y_t=1|\X}{\btheta,-H}$.  They show how to
backpropagate gradients through the forward-backward algorithm, which they use
to compute the marginal probabilities.

\paragraph{Generalizing structured attention.}

We now show how to generalize segmentation layers to any $\Omega$ and how to
backpropagate through them efficiently.  Using the notation from
\S\ref{sec:seq_pred}, any $\y$ can be represented as a tensor $\Y \in \{0,1\}^{T
\times 2 \times 2}$ and the potentials as a tensor $\btheta \in \RR^{T \times 2
\times 2}$. As in \citep{kim_structured_2017}, we define
\begin{equation}
\theta_{t,1,j} \triangleq \x_t \M \z + t_{1,j}
\quad \text{and} \quad
\theta_{t,0,j} \triangleq t_{0,j},
\end{equation}
where $\x \M \z$ is a learned bilinear form and $\T \in \RR^{2 \times 2}$ is
a learned transition matrix. Following \S\ref{sec:seq_pred}, the gradient
$\nabla \viterbiOmega(\btheta)$ is equal to the expected matrix $\E \in \RR^{T
\times 2 \times 2}$ and the marginals are obtained by marginalizing that matrix.
Hence, we can set $w_t \propto \proba{y_t=1|\X}{\btheta,\Omega} = e_{t,1,0} +
e_{t,1,1}$. 

Backpropagating through $\nabla \viterbiOmega(\btheta)$ can be carried out using
our approach outlined in \S\ref{sec:hessian}. This approach is not only more
general, but also simpler and more robust to underflow problems than
backpropagating through the forward-backward algorithm  as done in
\citep{kim_structured_2017}. 

\paragraph{Experiments.} We demonstrate structured attention layers with an LSTM
encoder and decoder to perform French to English translation using data from a 1
million sentence subset of the WMT14 FR-EN challenge. We illustrate an example of
attenion map obtained with negentropy and $\ell_2^2$ regularizations in
Figure~\ref{fig:attention}. Non-zero elements are underlined with borders:
$\ell_2^2$-regularized attention maps are sparse and more interpretable --- this
provides a structured alternative to sparsemax attention ~\citep{sparsemax}.
Results were all within~$0.8$ point of BLEU score on the newstest2014 dataset.
For French to English, standard softmax attention
obtained
\textbf{27.96}, while entropy and $\ell_2^2$ regularized attention obtained
\textbf{27.96} and \textbf{27.19} --- introducing structure and sparsity
therefore provides enhanced interpretability with comparable peformance.
We provide model details, full results and further visualizations in \S\ref{appendix:attention}.

\begin{figure}[t]
    \centering
    \includegraphics[width=.7\linewidth]{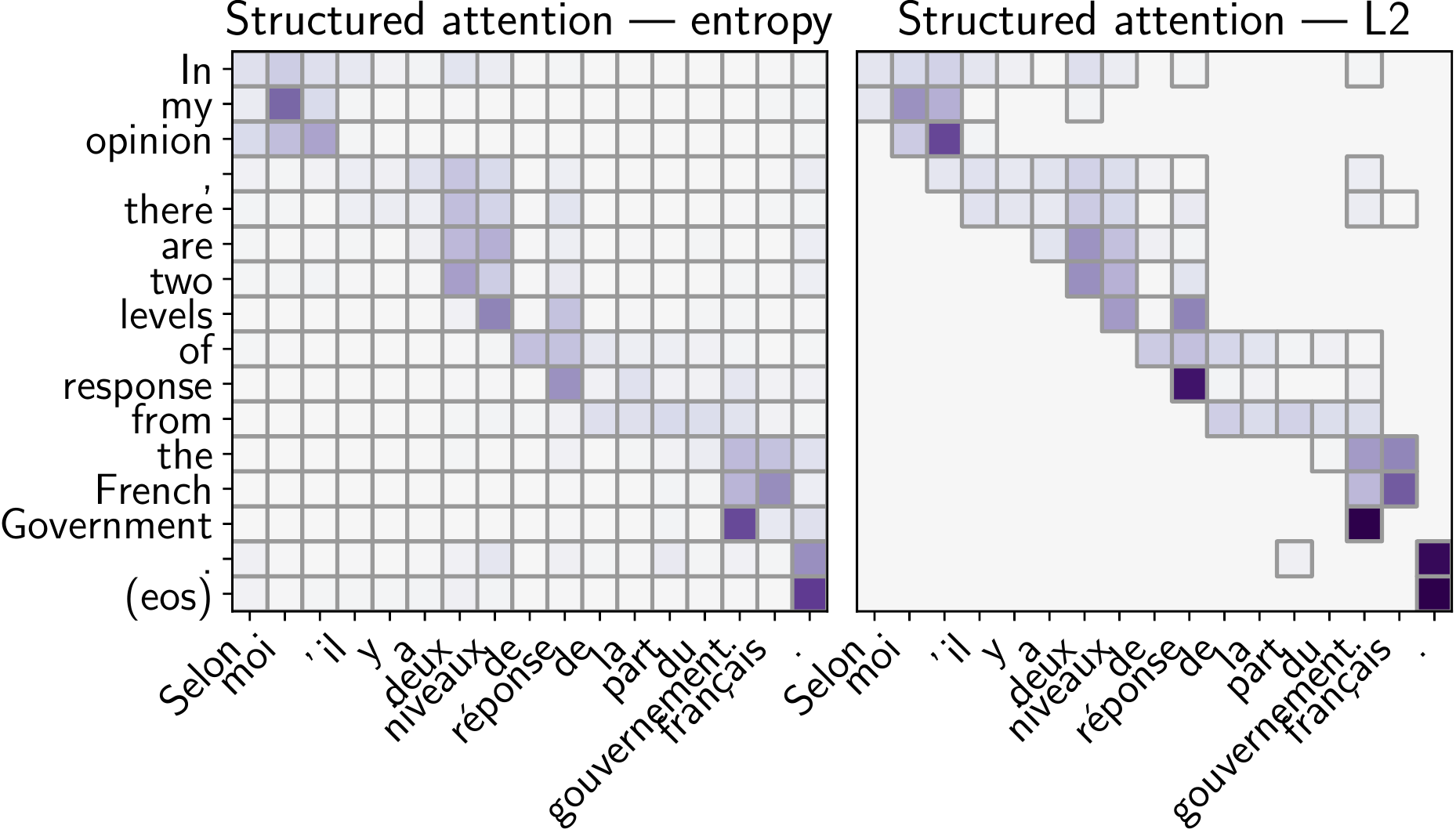}
    \caption{Attention maps obtained with structured attention. Although
        both regularizations led to the same translation ($y$-axis) in this
        example,
        attention is sparse and more interpretable with $\ell_2^2$.}
        \label{fig:attention}
\end{figure}

\section{Conclusion}

We proposed a theoretical framework for turning a broad class
of dynamic programs into convex, differentiable and tractable operators, using
the novel point of view of smoothed max operators.  Our work sheds a new light
on how to transform dynamic programs that predict hard assignments
(\textit{e.g.,} the maximum a-posteriori estimator in a probabilistic graphical
model or an alignment matrix between two time-series) into continuous and probabilistic ones.  We provided a new argument in
favor of negentropy regularization by showing that it is the only one to
preserve \textit{associativity} of the smoothed max operator.  We showed that
different regularizations induce different distributions over outputs and that $\ell_2^2$
regularization has other benefits, in terms of sparsity of the expected outputs.  Generally
speaking, performing inference in a graphical model and backpropagating through
it reduces to computing the first and second-order derivatives of a relaxed
maximum-likelihood estimation --- leveraging this observation yields elegant and efficient
algorithms that are readily usable in deep learning frameworks, with various
promising applications.

\section*{Acknowledgements}

MB thanks Vlad Niculae and Marco Cuturi for many fruitful discussions. AM thanks Julien
Mairal, Inria Thoth and Inria Parietal for lending him the computational
resources necessary to run the experiments. He thanks University Paris-Saclay and his Ph.D. supervisors
Bertrand Thirion and Ga\"el Varoquaux for allowing him to do an internship at NTT, and Olivier Grisel for his
insightful comments.


\clearpage

\appendix

\begin{center}
    {\Huge \bf Appendix}
\end{center}

\section{Proofs and detailed derivations}

This section contains the proofs of the propositions and lemmas
presented in the main text. It also contains derivations
of gradient, directional derivative and Hessian-product computations.

\subsection{Proof of Lemma \ref{lemma:max_Omega_properties} (properties of
$\maxOmega$)}
\label{appendix:proof_lemma_max_Omega_properties}

\paragraph{Property \ref{property:maxOmega_bound} (boundedness).} 
Let $\q^\star$ and
$\q^\star_\Omega$ be the solutions of $\max_{\q \in \Simplex^D} \q^\top \x$ and
$\max_{\q \in \Simplex^D} \q^\top \x - \Omega(\q)$, respectively.  Then, we have
\begin{equation} \maxOmega(\x) 
= \langle \q^\star_\Omega, \x \rangle - \Omega(\q^\star_\Omega) 
\ge \langle \q^\star, \x \rangle - \Omega(\q^\star) 
= \max(\x) - \Omega(\q^\star)
\end{equation}
and
\begin{equation}
\max(\x) - \Omega(\q^\star_\Omega) 
\ge \langle \q^\star_\Omega, \x \rangle - \Omega(\q^\star_\Omega)
= \maxOmega(\x).
\end{equation}
Combining the two and using $L_{\Omega,D} \le \Omega(\q) \le U_{\Omega,D} ~\forall \q
\in \Simplex^D$, we obtain
\begin{equation}
    \max(\x) - U_{\Omega,D} \le
\max(\x) - \Omega(\q^\star) \le 
\maxOmega(\x) \le \max(\x) - \Omega(\q^\star_\Omega) \le
\max(\x) - L_{\Omega,D}.
\end{equation}
When $\Omega(\q) = \sum_i q_i \log q_i$, we have the tight inequality $-\log D
\le \Omega(\q) \le 0 ~ \forall \q \in \Simplex^D$ and hence
\begin{equation}
    \max(\x) \le \maxOmega(\x) \le \max(\x) + \log D.
\end{equation}
When $\Omega(\q) = \frac{1}{2} \|\q\|^2$, we have the tight inequality
$\frac{1}{2D} \le \Omega(\q) \le \frac{1}{2} ~ \forall \q \in \Simplex^D$ and hence
\begin{equation}
\max(\x) - \frac{1}{2} \le \maxOmega(\x) \le \max(\x) - \frac{1}{2D}.
\end{equation}

Note that the difference $U_{\Omega,D} - L_{\Omega,D}$ is equal to $\log D$ when
$\Omega$ is the negative entropy and to $\frac{D-1}{2D} \le \frac{1}{2}$ when
$\Omega$ is the squared $\ell_2$ norm. Since $\log D > \frac{1}{2}$ for all
integers $D \ge 2$, we get a better approximation of the $\max$ operator using squared
$\ell_2$ norm than using negative entropy, whenever $D \ge 2$.

\paragraph{Property \ref{property:maxOmega_distrib} (distributivity of $+$ over
$\maxOmega$).} This follows immediately from 
\begin{equation}
\maxOmega(\x + c\ones) 
= \displaystyle{\max_{\q \in \Simplex^D}} \langle \q, \x + c\ones \rangle - \Omega(\q)
= \displaystyle{\max_{\q \in \Simplex^D}} \langle \q, \x \rangle - \Omega(\q) + c
= \maxOmega(\x) + c.
\end{equation}
Using our shorthand notation, this simply becomes $\underset{\Y \in \cY}{\maxOmega}
~ (f(\Y) + c) = \left(\underset{\Y \in \cY}{\maxOmega} ~ f(\Y)\right) + c$.

\paragraph{Property \ref{property:maxOmega_comm} (commutativity).}
Assume $\Omega(\P \q) = \Omega(\q)$ for all permutation matrices $\P$.
Let $\P^{-1}$ be the inverse permutation matrix associated with $\P$. 
Then we have
\begin{align}
\maxOmega(\P \x) 
&= \max_{\q \in \Simplex^D} \langle \q, \P \x \rangle - \Omega(\q)
= \max_{\q \in \Simplex^D} \langle \P^{-1} \q, \x \rangle - \Omega(\q) \\
&= \max_{\q \in \Simplex^D} \langle \q, \x \rangle - \Omega(\P \q)
= \max_{\q \in \Simplex^D} \langle \q, \x \rangle - \Omega(\q).
\end{align}

\paragraph{Property \ref{property:maxOmega_non_decreasing} (non-decreasingness
in each coordinate).}
If $\x \le \y$, then for all $\q \in \Simplex^D$, $\langle \x, \q\rangle  - \Omega(\q)\le
\langle \y, \q \rangle  - \Omega(\q)$,
as all $\q$ coordinates are non-negative. Thus $\maxOmega(\x) \le \maxOmega(\y)$.

\paragraph{Property \ref{property:nabla_maxOmega_j_equal_0} (insensitivity to
$-\infty$).}
Since $\maxOmega(\x) = \max_{\q \in \Simplex^D} \langle \q, \x \rangle -
\Omega(\q)$, if $x_j {=}-\infty$, then $q_j = \nabla \maxOmega(\x)_j = 0$ is the
only feasible solution for the $j$\textsuperscript{th} coordinate.

\subsection{Proof of Proposition \ref{proposition:recursion} (optimality of DP
recursion)}
\label{appendix:proof_proposition_recursion}

Let $v_i(\btheta)$ be the highest-score path up to node $i \in [N]$.  Let
$\cY_i$ be the set of paths $\y = (y_1, \dots, y_L)$ starting from node $1$ and
reaching node $i$, that is $y_1 = 1$ and $y_L = i$. Note that $L$ may depend on
$\y$ but we do not make this dependency explicit. Because nodes are sorted in
topological order, we can compute $v_i(\btheta)$ by 
\begin{equation}
v_i(\btheta) = \max_{\y \in \cY_i} ~ \sum_{t=2}^{L} \theta_{y_t,y_{t-1}}
= \max_{\y \in \cY_i} ~ 
\sum_{t=2}^{L-1} \theta_{y_t,y_{t-1}} + \theta_{y_L, y_{L-1}}
= \max_{\y \in \cY_i} ~ 
\sum_{t=2}^{L-1} \theta_{y_t,y_{t-1}} + \theta_{i, y_{L-1}}.
\end{equation}
Recall that $\cP_i$ is the set of parent nodes of node $i$. From the \textit{associativity}
of the max operator,
\begin{equation}
v_i(\btheta) 
= \max_{j \in \cP_i} \max_{\substack{\y \in \cY_i\\ y_{L-1}=j}}
~ \left( \sum_{t=2}^{L-1} \theta_{y_t,y_{t-1}} + \theta_{i, y_{L-1}} \right)
= \max_{j \in \cP_i} \max_{\substack{\y \in \cY_i\\ y_{L-1}=j}}
~ \left( \sum_{t=2}^{L-1} \theta_{y_t,y_{t-1}} + \theta_{i, j} \right).
\end{equation}
From the \textit{distributivity} of $+$ over $\max$, we obtain
\begin{equation}
v_i(\btheta) 
= \max_{j \in \cP_i} \left( \max_{\substack{\y \in \cY_i\\ y_{L-1}=j}}
~ \sum_{t=2}^{L-1} \theta_{y_t,y_{t-1}} \right) + \theta_{i, j}
= \max_{j \in \cP_i} ~ v_j(\btheta) + \theta_{i,j},
\end{equation}
where we used the fact that the inner max operations are independent of $y_L = i$.
This concludes the proof of the optimality of \eqref{eq:general_recursion}.

\subsection[Proof of Proposition dpOmega]{Proof of Proposition \ref{proposition:dpOmega} (properties of
    $\dpOmega(\btheta)$)}
\label{appendix:proof_dpOmega}

We prove in this section the three main claims of Proposition
\ref{proposition:dpOmega}. For the first two claims, we 
rewrite \eqref{eq:general_recursion} and \eqref{eq:smoothed_recursion} 
using the following notations:
\begin{align}
v_i^0(\btheta) &\triangleq \max(\u_i^0(\btheta)) 
\quad \text{and} \quad
v_i^\Omega(\btheta) \triangleq \max(\u_i^\Omega(\btheta)),
\quad \text{where} \\
\u_i^0(\btheta) &\triangleq (\theta_{i,1} +  v_1^0(\btheta), \dots,
\theta_{i,i-1} + v_{i-1}^0(\btheta), 
-\infty, -\infty, \dots, -\infty) \in \RR^N \quad \text{and} \\
\u_i^\Omega(\btheta) &\triangleq (\theta_{i,1} + v_1^\Omega(\btheta), \dots,
\theta_{i,i-1} + v_{i-1}^\Omega(\btheta), 
\underbrace{-\infty}_{i}, -\infty, \dots, -\infty) \in \RR^N.
\end{align}
These definitions are indeed valid as per Lemma \ref{lemma:max_Omega_properties},
property \ref{property:nabla_maxOmega_j_equal_0}.

\paragraph{Proof of $\dpOmega(\btheta)$ convexity.}

Since $v_1^\Omega(\btheta) = 0$, it is trivially convex.
Assume that $v_2^\Omega(\btheta), \dots, v_{i-1}^\Omega(\btheta)$ are convex.
Then, $v_i^\Omega(\btheta)$ is the composition of $\maxOmega$ and $\u_i^\Omega$,
a convex function and a function which outputs a vector whose each coordinate is
convex in $\btheta$. By induction, since
$\maxOmega$ is non-decreasing per coordinate (cf. Lemma
\ref{lemma:max_Omega_properties} property
\ref{property:maxOmega_non_decreasing}), $v_i^\Omega(\btheta)$ is convex \citep[\textit{e.g.,}][\S
3.2.4]{boyd2004convex}.  
Therefore $v_i^\Omega(\btheta)$ is convex for all $i \in [N]$ and 
$\dpOmega(\btheta) = v_N^\Omega(\btheta)$ is convex.

\paragraph{Proof of $\dpOmega(\btheta)$ bound.}

We clearly have $v_1^\Omega(\btheta) \ge v_1^0(\btheta)$.  
Assume that $v_j^\Omega(\btheta) \ge v_j^0(\btheta) - (j-1) U_{\Omega,N}$
for all $j \in \{2,\dots,i-1\}$. 
That is, $\u_i^\Omega(\btheta) \ge
\u_i^0(\btheta) - (i-2) U_{\Omega,N} \ones$, where $\ones \in \RR^N$ is
the unit vector. Then, by induction, we have
\begin{equation}
\maxOmega(\u_i^\Omega(\btheta)) \ge
\maxOmega(\u_i^0(\btheta)) - (i-2) U_{\Omega,N} \ge
\max(\u_i^0(\btheta)) - (i-1) U_{\Omega,N},
\end{equation}
where we used Lemma \ref{lemma:max_Omega_properties}, properties
\ref{property:maxOmega_bound}, \ref{property:maxOmega_distrib} and
\ref{property:maxOmega_non_decreasing}.
Therefore $v_i^\Omega(\btheta) \ge v_i^0(\btheta) - (i-1)U_{\Omega,N}$ for all $i
\in [N]$ and hence,
$\dpOmega(\btheta) \ge \lp(\btheta) - (N-1) U_{\Omega,N}$.
Using a similar reasoning we obtain
$v_i^0(\btheta) - (i-1) L_{\Omega,N} \ge v_i^\Omega(\btheta)$
and therefore $\lp(\btheta) - (N-1) L_{\Omega,N} \ge \dpOmega(\btheta)$.
To summarize, we obtain
\begin{equation}
    \lp(\btheta) - (N-1) L_{\Omega,N} \ge 
\dpOmega(\btheta) \ge
\lp(\btheta) - (N-1) U_{\Omega,N},
\end{equation}
which concludes the proof. Note that using property
\ref{property:maxOmega_bound} of~Lemma \ref{lemma:max_Omega_properties}, this immediately implies a bound involving
$\lpOmega(\btheta)$ instead of $\lp(\btheta)$.

\paragraph{Proof that $\Omega=-\gamma H \Rightarrow \dpOmega(\btheta) =
\lpOmega(\btheta)$. }

We first show that $\maxOmega$ is associative.
\begin{lemma}{Associativity of $\maxOmega$ when $\Omega=-\gamma H$}

We have $\maxOmega(\maxOmega(\x), c) = \maxOmega(\x, c)
\quad \forall \x \in \RR^D, c \in \RR$.
\label{lemma:associative_maxH}
\end{lemma}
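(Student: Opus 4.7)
The plan is to verify associativity by direct computation using the closed-form expression for $\max_{-\gamma H}$. For $\Omega = -\gamma H$, the unconstrained maximization of $\langle \q, \x\rangle + \gamma H(\q)$ subject to $\q \in \Simplex^D$ yields the well-known log-sum-exp formula
\begin{equation}
\max\nolimits_{-\gamma H}(\x) = \gamma \log \sum_{i=1}^D \exp(x_i/\gamma),
\end{equation}
with the softmax as its gradient. I would first state this identity (it follows immediately from writing the Lagrangian, taking the gradient in $\q$, and using the simplex constraint to pin down the dual variable).

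Next, I would plug the formula into both sides of the claimed identity. The right-hand side reads
\begin{equation}
\max\nolimits_{-\gamma H}(\x, c) = \gamma \log\!\Big(\textstyle\sum_{i=1}^D \exp(x_i/\gamma) + \exp(c/\gamma)\Big).
\end{equation}
The left-hand side reads
\begin{equation}
\max\nolimits_{-\gamma H}(\max\nolimits_{-\gamma H}(\x), c) = \gamma \log\!\Big(\exp(\max\nolimits_{-\gamma H}(\x)/\gamma) + \exp(c/\gamma)\Big),
\end{equation}
and $\exp(\max_{-\gamma H}(\x)/\gamma) = \sum_{i=1}^D \exp(x_i/\gamma)$ by the formula above, so the two sides coincide.

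There is essentially no obstacle here: associativity of $\max_{-\gamma H}$ reduces to the elementary additivity of $\sum_i \exp(\cdot /\gamma)$ under the log-sum-exp correspondence, i.e., to the identity $\log(e^{\log A} + e^{\log B}) = \log(A+B)$. I would conclude by briefly remarking that this associativity is precisely what will be leveraged in the remainder of the proof of Proposition~\ref{proposition:dpOmega} to collapse nested $\max_{-\gamma H}$ operations over the DAG recursion into a single $\max_{-\gamma H}$ over $\cY$, thereby establishing $\dpOmega(\btheta) = \lpOmega(\btheta)$ when $\Omega = -\gamma H$.
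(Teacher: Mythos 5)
Your proposal is correct and follows essentially the same route as the paper: both plug the closed-form log-sum-exp expression for $\max_{-\gamma H}$ into the left-hand side and observe that $\exp(\max_{-\gamma H}(\x)/\gamma) = \sum_{i=1}^D \exp(x_i/\gamma)$ collapses the nested operator into a single log-sum-exp over $D+1$ terms. The only difference is cosmetic --- you additionally sketch the Lagrangian derivation of the closed form, which the paper delegates to its appendix on examples of $\maxOmega$.
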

\begin{proof}We simply use the closed form of $\maxOmega$ when $\Omega = - \gamma H$
    (cf. \S\ref{appendix:omega_examples}):
\begin{align}
\maxOmega(\maxOmega(\x), c) 
&= \gamma \log(\exp(\maxOmega(\x) / \gamma) + \exp(c / \gamma)) \\
&= \gamma \log\left(\exp\left(\log \sum_{i=1}^D \exp(x_i / \gamma)\right) +
\exp(c / \gamma)\right) \\
&= \gamma \log \left(\sum_{i=1}^D \exp(x_i / \gamma) + \exp(c / \gamma)\right) \\
&= \maxOmega(\x, c),
\end{align}
and the lemma follows.
\end{proof}
Using our shorthand notation, Lemma \ref{lemma:associative_maxH} can be used to
write
\begin{equation}
\underset{(y_1, \dots, y_i, \dots, y_L)}{\maxOmega} f(\y) = 
\underset{v}{\maxOmega} \underset{(y_1, \dots, v, \dots, y_L)}{\maxOmega} f(\y).
\end{equation}
This is precisely the associative property that we used in the proof of
Proposition \ref{proposition:recursion}.  The second property that we used, the
distributivity of $+$ over $\max$, holds for any $\maxOmega$, as per Lemma
\ref{lemma:max_Omega_properties} property \ref{property:maxOmega_distrib}. Thus,
the same proof as Proposition \ref{proposition:recursion} is also valid when we
substitute $\max$ with $\maxOmega$, when $\Omega=-\gamma H$, which yields
$\lpOmega(\btheta) = \dpOmega(\btheta)$.

\paragraph{Proof that $\Omega=-\gamma H \Leftarrow \dpOmega(\btheta) =
\lpOmega(\btheta)$. }Mirroring the previous proof, we first characterize the regularizations $\Omega$
for which $\maxOmega$ is associative.

\begin{lemma}
    
Let $\Omega \colon \Simplex^D \to \RR$ be a regularization function,
\ie, $\dom \Omega = \Simplex^D$.
Assume that there exist $\omega$ convex lower-semi-continuous defined on $[0,1]$
such that $\Omega(\q) = \sum_{i=1}^d \omega(q_i)$. If
\begin{equation}
    \maxOmega(\maxOmega(\x), c) = \maxOmega(\x, c)
\quad \forall \x \in \RR^D, c \in \RR,
\end{equation}
then 
$\Omega(\q) = - \gamma \sum_{i=1}^d q_i \log(q_i)$
for some $\gamma \geq 0$.
\end{lemma}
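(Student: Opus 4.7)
The plan is to translate the associativity hypothesis on $\maxOmega$ into a classical functional equation for $\omega$ and then invoke a known characterization of Shannon entropy.

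First, I would exploit associativity on the gradient side. Setting $\q^\star(\y) \triangleq \nabla \maxOmega(\y)$ and $p \triangleq \nabla \maxOmega(\maxOmega(\x), c) \in \Simplex^2$, differentiating the identity $\maxOmega(\maxOmega(\x), c) = \maxOmega(\x, c)$ and applying the chain rule yields the tree-like factorization
\begin{equation}
\q^\star(\x, c) = \bigl(p_1\, \q^\star(\x),\, p_2\bigr) \in \Simplex^{D+1}.
\end{equation}
Intuitively, the optimal distribution splits into an outer choice between the bracketed block and the singleton, then an inner choice within the block.

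Second, I would reinject this factorization into the value identity using the Fenchel formula $\maxOmega(\y) = \langle \q^\star(\y), \y \rangle - \sum_i \omega(q^\star_i(\y))$. The inner-product contributions on the two sides of the associativity equation cancel thanks to the factorization, and after collecting terms I obtain the constraint
\begin{equation}
\sum_{i=1}^{D} \omega(p\, r_i) = p \sum_{i=1}^{D} \omega(r_i) + \omega(p),
\end{equation}
for all $p \in [0,1]$ and all $\r \in \Simplex^D$. Specializing to $D=1$ forces $\omega(1)=0$; specializing to $D=2$ gives exactly Shannon's \emph{recursivity} (or grouping) equation on $[0,1]^2$. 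As $(\x,c)$ ranges over $\RR^D \times \RR$, the pair $(p,\r)$ sweeps out the full product of simplices, so the equation really does hold everywhere on its stated domain.

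Third, I would invoke the classical characterization of entropy: together with the continuity of $\omega$ on $(0,1]$, which follows from its convexity and lower semi-continuity on the compact interval $[0,1]$, the displayed equation forces $\omega(q) = -\gamma\, q \log q$ (with the convention $0\log 0 = 0$) --- this is the functional-equation result of \citep{horibe_entropy_1988}, revisited in \citep{gselmann_entropy_2011}. Nonnegativity $\gamma \ge 0$ then comes from convexity of $\omega$ (since $\omega''(q) = -\gamma/q$ on $(0,1)$), and summing over coordinates yields $\Omega(\q) = -\gamma \sum_i q_i \log q_i$ as claimed.

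The main obstacle is the rigorous derivation of the factorization in the first step at points where $\maxOmega$ may be non-differentiable (some coordinates of $\q^\star$ can vanish, \emph{e.g.}\ for $\ell_2^2$ regularization). I would handle this by first proving the factorization on the relative interior of the simplex, where $\Omega$ is smooth and the chain rule applies literally, and then extending to the boundary by continuity together with the lower semi-continuity of $\omega$. Once the factorization is in hand, the reduction to Shannon's equation and the citation-based conclusion are routine.
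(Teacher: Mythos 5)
Your argument is correct and lands on exactly the same functional equation and the same citation (\citealp{horibe_entropy_1988}; \citealp{gselmann_entropy_2011}) as the paper, but it gets there by a genuinely different, ``primal'' route. The paper never differentiates: it performs the change of variables $q_{1,2} = q\,\tilde q_{1,2}$ inside the nested maximization to recognize $\maxOmega(\maxOmega(x_1,x_2),x_3)$ as the conjugate of a grouped regularizer $\Phi$, deduces $\Omega^\star = \Phi^\star$, gets $\Omega = \Phi^{\star\star} \le \Phi$ by Fenchel--Moreau, and then closes the gap $\Omega \ge \Phi$ by a contradiction argument that evaluates both operators at a point where the argmax hits a prescribed $\q$ (using surjectivity of $\nabla\maxOmega$ onto $\Simplex^2$ via the intermediate value theorem). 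You instead differentiate the associativity identity to obtain the product factorization $\q^\star(\x,c) = (p_1\,\q^\star(\x),\,p_2)$ and substitute it back into the Fenchel value formula, which directly yields $\sum_i \omega(p\,r_i) = p\sum_i\omega(r_i) + \omega(p)$. The trade-off is where the technical burden sits: the paper pays with biconjugation plus the one-sided contradiction argument, while you pay with differentiability of $\maxOmega$ (automatic under the paper's standing strong-convexity assumption, though not literally guaranteed by the lemma's ``convex lsc'' hypothesis alone) and with the claim that $(p,\r)$ sweeps a dense subset of $[0,1]\times\Simplex^D$ --- which is the same surjectivity fact the paper needs, so neither route escapes it. Your extra observations ($\omega(1)=0$ from degenerate choices of $\r$, and $\gamma\ge 0$ from convexity of $\omega$) are sound refinements; just note that ``specializing to $D=1$'' should really be phrased as taking $\r$ at a vertex of the fixed $\Simplex^D$, which requires a limiting or continuity argument since vertices are attained only in the limit.
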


\begin{proof}We start by writing the associativity property
     for three elements. For all $x_1, x_2, x_3 \in \RR$,
\begin{align}
    \maxOmega&\big((x_1, x_2, x_3)\big) =
    \maxOmega\big(\maxOmega(x_1, x_2), x_3)\big) \\
    &= \max_{\substack{q + q_3 = 1 \\
    q, q_3 \ge 0}}
     q\,\max_{\substack{\tilde q_1 + \tilde q_2 = 1 \\
     \tilde q_i \ge 0}} \big(
     \tilde q_1 x_1 + \tilde q_2 x_2 - \omega(\tilde q_1)
     - \omega(\tilde q_2) \big)
    + q_3 x_3
    - \omega(q_3) - \omega(q) \\ 
    &= \max_{\substack{q_1 + q_2 + q_3 = 1 \\
    q_i \ge 0}}
    q_1 x_1 + q_2 x_2 + q_3 x_3 - \Phi(q_1, q_2, q_3),\quad\text{where} \\
    \Phi(q_1, q_2, q_3) &\triangleq (q_1 + q_2)
    \Big(\omega\big(\frac{q_1}{q_1 + q_2}\big) + 
    \omega\big(\frac{q_2}{q_1 + q_2}\big) \Big)
    + \omega(q_1 + q_2) + \omega(q_3).
\end{align}
We have performed a variable change $q_{1, 2} =  q\, \tilde q_{1, 2}$
at the second line, and noticed $q = q_1 + q_2$. Therefore
\begin{equation}
    \maxOmega\big((x_1, x_2, x_3)\big) = \Phi^\star(x_1, x_2, x_3),
\end{equation}
where $\Phi^\star$ is the convex conjugate of $\Phi$ restricted
to $]0, 1]^3$. By definition, we also have
$\maxOmega\big((x_1, x_2, x_3)\big) = \Omega^\star(x_1, x_2, x_3)$,
so that $\Omega^\star = \Phi^\star$ on $\RR^3$. As $\Omega$ is convex
and lower semi-continous, we can apply Moreau-Yoshida theorem and obtain
$\Omega^{\star\star} = \Omega = \Phi^{\star\star} \le \Phi$.

Suppose that there exists $\q = (q_1, q_2, q_3) \in \Simplex^3$ such that $\Phi(q_1, q_2, q_3) < \Omega(q_1, q_2, q_3)$.
Given the forms of $\Phi$ and $\Omega$, $\Phi(q_1, q_2, 0) < \Omega(q_1, q_2, 0)$. 
We let $\x = (x_1, x_2, -\infty) \in \RR^3$ such that
\begin{align}
    \maxOmega(x_1, x_2, -\infty) &= \maxOmega(x_1, x_2) = x_1 q_1 + x_2 q_2 - \omega(q_1) - \omega(q_2)
    = \langle \x, \q \rangle - \Omega(\q) \\
    &< \langle \x, \q \rangle - \Phi(\q)
    \leq \max_{\q \in \Simplex^3} \langle \x, \q \rangle - \Phi(\q) = \maxOmega\big(\maxOmega(x_1, x_2), -\infty)\big),
\end{align}
leading to a contradiction. Therefore $\Omega \geq \Phi$ over $\Simplex^3$, and finally $\Omega = \Phi$.
We have used the fact that the operator $\nabla\maxOmega : \RR^2 \to \Simplex^2$
 is surjective, as $\Simplex^2$ is a one-dimensional segment,
 $\nabla\maxOmega$ is continuous and reaches the extreme values $\nabla\maxOmega(0, -\infty) = (1, 0)$ and
$\nabla\maxOmega(-\infty, 0) = (0, 1)$ --- which allows to use the intermediate value theorem.

To conclude, for all $q_1, q_2 \in ]0, 1]$ such that $q_1 + q_2 \leq 1$, we have
\begin{align}
    \omega(q_1) + \omega(q_2) &= (q_1 + q_2)
    \Big(\omega\big(\frac{q_1}{q_1 + q_2}\big) + 
    \omega\big(\frac{q_2}{q_1 + q_2}\big) \Big)
    + \omega(q_1 + q_2) \\ 
    \omega(xy) + \omega((1 - x)y) - \omega(y) &= y(\omega(x) + \omega(1-x))
    \quad \forall\,0 < y \leq 1,\, 0 < x < 1,\label{eq:functional}
\end{align}
where we have set $y = q_1 + q_2$ and $x = \frac{q_1}{q_1 + q_2}$. The
functional equation~\eqref{eq:functional} was first studied in the field
of information theory. As first shown by~\citet[Theorem 0]{horibe_entropy_1988},
and further extended~\citep{gselmann_entropy_2011},
all measurable solutions have the form
\begin{equation}
    \omega(x) = - \gamma x \log(x),
\end{equation}
where $\gamma \ge 0$ is a constant. The lemma follows.
\end{proof}

Assuming that $\Omega$ is not equal to $- \gamma H$ for any $\gamma \ge 0$, the previous lemma
tells us that the associativity property is not met for a triplet $(x_1, x_2, x_3) \in \RR^3$.
In Figure~\ref{fig:example}, we construct a graph~$G$ such that
\begin{equation}
    \dpOmega(\btheta) = \maxOmega(\maxOmega(x_1, x_2), x_3) \neq \lpOmega(\btheta) = \maxOmega(x_1, x_2, x_3)   
\end{equation}
The proposition follows.
\begin{figure}[ht]
    \centering
    \includegraphics[width=.4\textwidth]{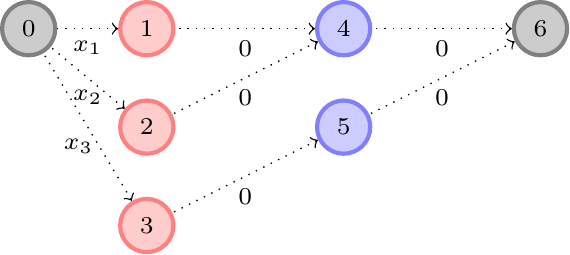}
    \caption{In general, $v_6(\btheta) = \dpOmega(\btheta) \neq \lpOmega(\btheta)$.}\label{fig:example}
\end{figure}

\subsection[Computation of gradient]{Computation of $\nabla \lpOmega(\btheta)$ and 
interpretation as an expectation}
\label{appendix:nabla_lp_omega}

We show that $\nabla \lpOmega(\btheta) \in \conv(\cY)$, and
characterize a path distribution of which $\nabla \lpOmega(\btheta)$ 
is the expectation.

\paragraph{Convex hull of $\cY$.}

We rewrite
$\lpOmega(\btheta) = \maxOmega(\u(\btheta))$, where $\u(\btheta) \triangleq
(\langle \Y, \btheta \rangle)_{\Y \in \cY}$. Using the chain rule, we have
\begin{equation}
\nabla \lpOmega(\btheta) = \J_{\u}(\btheta)^\top \nabla
\maxOmega(\u(\btheta)),
\label{eq:lpOmega_gradient}
\end{equation}
where $\J_{\u}$ is the Jacobian of $\u$ \wrt $\btheta$, a matrix of size $|\cY|
\times (N \times N)$. The horizontal slices of $\J_{\u}$ are exactly all the paths
$\Y$ of $\cY$. Using $\nabla \maxOmega(\u(\btheta)) \in
\Simplex^{|\cY|}$, we conclude that $\nabla \lpOmega(\btheta) \in \conv(\cY)$.

\paragraph{Induced distribution.}

From \eqref{eq:lpOmega_gradient}, we see that $\nabla \lpOmega(\btheta) =
\sum_{\Y \in \cY} \proba{\Y}{\btheta,\Omega} ~ \Y$, where we defined the distribution
\begin{equation}
\proba{\Y}{\btheta,\Omega} \triangleq \Big(\nabla \maxOmega(\u(\btheta)) \Big)_\Y.
\end{equation}
Unfortunately, since $\u(\btheta) \in \RR^{|\cY|}$, computing
$\proba{\Y}{\btheta,\Omega}$, let alone the expectation
$\expect{\Y}{\btheta,\Omega}$
under that distribution, is intractable for general $\Omega$.

\subsection[Proof nabla dpOmega]{Proof of Proposition \ref{proposition:nabla_dpOmega} (computation of
$\nabla \dpOmega(\btheta)$)
\label{appendix:proof_proposition_nabla_dpOmega}}

\paragraph{Gradient computation.}We first derive the recursion over $\E \triangleq \nabla \dpOmega(\btheta)$ using sensitivity analysis,
a.k.a backpropagation calculus.  For any
$(i,j) \in \cE$, since $\theta_{i,j}$ influences only $v_i$, a straighforward
application of the chain rule gives
\begin{equation}
e_{i,j} =
\partialfrac{v_N}{\theta_{i, j}} = 
\myblue{\partialfrac{v_N}{v_i}} \mygreen{\partialfrac{v_i}{\theta_{i, j}}}.
\label{eq:e_ij}
\end{equation}
Recall that $\v = (v_1, \dots, v_N)$ and
$\q_i \triangleq \nabla \maxOmega(\btheta_i + \v)$.
With this vector defined, we can now easily derive the two terms on the r.h.s of
\eqref{eq:e_ij}.  Differentiating
\eqref{eq:smoothed_recursion} \wrt $\theta_{i,j}$ straighforwardly gives the second term
$\mygreen{\partialfrac{v_i}{\theta_{i,j}}} = q_{i,j}$.

The first term must be computed recursively. Recall that $\cC_j$ denotes the children
of node $j$. Since a node $j$ influences only its children $i \in \cC_j$, using
the chain rule, we get
\begin{equation}
\partialfrac{v_N}{v_j} = 
\sum_{i \in \cC_j} \myblue{\partialfrac{v_N}{v_i}}
\partialfrac{v_i}{v_j} \triangleq \bar e_j.
\label{eq:v_N_v_j}
\end{equation}
Differentiating \eqref{eq:smoothed_recursion} \wrt $v_j$ again gives
$\partialfrac{v_i}{v_j} = q_{i,j}$. By definition, we also have
$\myblue{\partialfrac{v_N}{v_i}} = \bar e_i$ and $e_{i,j}= \bar e_i q_{i,j}$.
Hence,
\begin{equation}
\bar e_j = \sum_{i \in \cC_j} \bar e_i q_{i,j} = \sum_{i \in \cC_j} e_{i,j}.
\end{equation}
Combining the above, for any $j \in [N-1]$, we obtain the following two-step
recursion
\begin{equation}
\forall\,i \in \cC_j,\,e_{i,j} = \bar e_i q_{i,j} ~
\quad \text{and} \quad
\bar e_j = \sum_{i \in \cC_j} e_{i,j}.
\label{eq:backward}
\end{equation}
The values ${(e_{i, j})}_{(i, j) \in \cE}$ can thus be computed in
reverse topological order over the nodes of $G$, initializing $\bar e_N =
\partialfrac{v_N}{v_N} = 1$.  
The pseudo-code is summarized in Algorithm \ref{alg:dp}.

\paragraph{Associated random walk.} It remains to show that $\E$ is also 
the expectation of $\Y \in \cY$ support of the following random walk,
defined informally in the main text.
Formally, we define the random sequence ${(w_t)}_t$~as
\begin{equation}
    w_0 = N,\quad
    \forall\,t>0,\,\forall\,i\in[N],\,\forall\,j \in \cP_i,\,\quad
    \PP[w_t=j | w_{t-1} = i] = q_{i, j}.
\end{equation}
We set~$y_{i, j} \triangleq
    \bm{1}\{\exists\,t > 0\text{ s.t. } w_{t-1} = i, w_t = j\}$
where $\bm 1$ is the characteristic function of an event, thereby defining
a random variable $\Y \in \cY$, with distribution $\cD$.
We leave implicit the dependency of $\PP$ in $\btheta$ and $\Omega$.
As the depth of $w_t$ (number of edges to connect to the root node) is
stricly decreasing with $t$, ${(w_t)}_t$ reaches node $1$ in finite time
 with probability one and is constant after this event. We introduce
 the random variables ${(\bar y_j)}_j$, defined for all $j \in [N]$ as
 \begin{equation}
    \bar y_j \triangleq \bm{1}\{\exists\,t \ge 0, w_{t} = j\} = \sum_{i \in \cC_j} y_{i, j}
    \text{ if $j \neq N$, $0$ otherwise.}
 \end{equation}
 
 By definition,
 using the fact that $\PP[w_t=j | w_{t-1} = i]$ is independent of $t$ (Markov
 property), for all $i \in \cC_j$ and for all $j \in [N-1]$, we have
 \begin{align}
     \PP[y_{i,j} = 1] &= \EE[y_{i,j}] = \PP[\exists\,t > 0, w_{t-1} = i]
      \PP[w_t=j | w_{t-1} = i] = \EE[\bar y_i] q_{i, j}.
 \end{align}
Linearity of the expectation then provides
\begin{equation}
    \EE[\bar y_j] = \sum_{i \in \cC_j} \EE[y_{i, j}],
\end{equation}
with initialization $\EE[\bar y_N] = 1$. We recover the same two-step
recursion as the one defining $\E$ and $\bar \e$, with the same initialization. Hence
the probabilistic interpretation of the gradient, where the expectation
is taken with respect to the distribution $\cD$ of $\Y$:
\begin{equation}
\E = \EE_{\btheta,\Omega}[\Y] \quad \text{and} \quad \bar \e = \EE_{\btheta,\Omega}[\bar \y].
\end{equation}

\begin{figure}
    \begin{widepage}
    \colorbox{white}{
    \begin{minipage}[t]{0.48\textwidth}
       \begin{algorithm}[H]
   \begin{algorithmic}
     \Input Edge weights $\btheta \in \RR^{N \times N}$
     \State \mygray{$v_1 \gets 0,\quad\bar e_N \gets 1, \qquad \Q, \E \gets \mathbf{0} \in \RR^{N \times N}$}
     \For{$i \in [2,\dots, N]$} \qquad\Comment{Topological order}
     \State $v_i \gets \underset{j \in \cP_i}{\maxOmega}~ \theta_{i,j} + v_j$
     \State ${(\q_{i, j})}_{j \in \cP_i} \gets \nabla\underset{j \in
 \cP_i}{\maxOmega}~ \theta_{i,j} + v_j$
     \EndFor
     \For{$j \in [N - 1, \dots,1]$} \quad\Comment{Reverse topological order}
     \State $\forall\,i \in \cC_j,\: e_{i,j} \gets e_{i,j} \bar e_i$,\qquad
     $\bar e_j \gets \sum_{i \in \cC_j} e_{i,j}$
     \EndFor
     \State \Return $\dpOmega(\btheta) = v_N$, $\nabla\dpOmega(\btheta) = \E \in \RR^{N \times N}$\\
     \hspace{1.4cm}Intermediate computation for Algorithm \ref{alg:hessian_dp} \\
     \hspace{1.4cm}$\bar \e \triangleq [\bar e]_{i=1}^N \in \RR^N$, $\Q \in \RR^{N \times N}$
     \vspace{.07cm}
   \end{algorithmic}
   \caption{Compute $\dpOmega(\btheta)$ and $\nabla \dpOmega(\btheta)$}\label{alg:dp}
   \end{algorithm}
   \end{minipage}}\hfill
   \begin{minipage}[t]{0.48\textwidth}
    \newcounter{algeq}
    \let\oldthealgeq\thealgeq
    \renewcommand{\thealgeq}{A\oldthealgeq}
   \begin{algorithm}[H]
   \begin{algorithmic}
     \Input Edge weights and perturbation $\btheta, \Z \in \RR^{N \times N}$
     \State Call Algorithm \ref{alg:dp} with input $\btheta$ to get $\bar \e$ and $\Q$
     \State \mygray{$\dot v_1 \gets 0;\qquad\dot{\bar{e_N}} \gets 0,
     \qquad \dot \Q, \dot \E \gets \mathbf{0} \in \RR^{N \times N}$}
     \For{$i \in [2, \dots,N]$} \quad\Comment{Topological order}
     \State $\dot v_i \gets \sum_{j \in \cP_i} q_{i,j} (z_{i, j} + \dot v_j)$
     \hfill \refstepcounter{algeq}(\thealgeq)\label{eq:directional}
     \State ${(\dot \q_{i, j})}_{j \in \cP_i} \gets \J_\Omega\big((\q_{i, j})_{j \in \cP_i}\big)
     (z_{i,j} + \dot v_j)_{j \in \cP_i}$
     \hfill\refstepcounter{algeq}(\thealgeq)\label{eq:hessian}
     \EndFor
     \For{$j \in [N - 1,\dots, 1]$} \quad \Comment{Reverse topological order}
     \State $\forall\,i \in \cC_j,\: \dot e_{i,j} \gets \dot q_{i,j} \bar e_i + q_{i,j} \dot{\bar{e_i}}$
     \hfill \refstepcounter{algeq}(\thealgeq)\label{eq:hessian_backprop}
     \State $\dot{\bar{e_j}} \gets \sum_{i \in \cC_j} \dot e_{i,j}$
     \EndFor
     \State \Return $\langle\nabla \dpOmega(\btheta), \Z\rangle = \dot v_N$ \\
     \hspace{1.5cm}$\nabla^2 \dpOmega(\btheta)\Z = \dot \E \in \RR^{N \times N}$
   \end{algorithmic}
   \caption{Compute $\langle\nabla\dpOmega(\btheta), \Z\rangle$ and $\nabla^2 \dpOmega(\btheta)\Z$}\label{alg:hessian_dp}
   \end{algorithm}
   \end{minipage}
    \end{widepage}
\end{figure}

\subsection[Computation of directional derivative]{Computation
of the directional derivative $\langle \nabla \dpOmega(\btheta), \Z \rangle$}
\label{appendix:directional_derivative}

The derivations of the following two sections allows to write Algorithm~\ref{alg:hessian_dp}. Let $\dot v_i \triangleq \langle\nabla v_i(\btheta), \Z\rangle$,
where $v_i(\btheta)$ is defined in \eqref{eq:smoothed_recursion}.
Since $v_i$ only directly depends on $v_j + \theta_{i,j}$ for
$j \in \cP_i$, a straighforward differentiation of 
$\langle\nabla v_i(\btheta), \Z\rangle$ gives
\begin{equation}
\dot v_i = \sum_{j \in \cP_i} \partialfrac{v_i}{v_j}
~ \left(\dot v_j + z_{i, j}\right).
\end{equation}
Recall that $\partialfrac{v_i}{v_j} = q_{i,j}$ and has already been
obtained when computing $\nabla \dpOmega(\btheta)$. Hence equation~\eqref{eq:directional}, reproduced here:
\begin{equation}
\forall\,i \in [2,\dots, N]:\qquad\dot v_i = \sum_{j \in \cP_i} q_{i, j} (\dot v_j + z_{i, j}).
\label{eq:v_dot_i}
\end{equation}
This recursion can be computed in topological order, starting from $\dot
v_1 = 0$ to finish at $\dot v_N = \langle \nabla \dpOmega(\btheta), \Z \rangle$.

\subsection[Computation of Hessian product]{Computation of the Hessian-vector product $\nabla^2 \dpOmega(\btheta) \Z$}
\label{appendix:hessian_vector_product}

For convenience, let us define $\nabla^2 \dpOmega(\btheta) \Z
\triangleq \dot \E$.
For $(i,j) \notin \cE$, we evidently have $\dot e_{i,j} = 0$.
For $(i,j) \in \cE$, since $\theta_{i, j}$ influences only
$v_i$ and $\dot v_i$, we obtain
\begin{equation}
\dot e_{i, j} = \partialfrac{\dot v_N}{\theta_{i, j}}
= \mymagenta{\partialfrac{\dot v_N}{v_i}} 
\mygreen{\partialfrac{v_i}{\theta_{i,j}}}
+ \myblue{\partialfrac{\dot v_N}{\dot v_i}} \myyellow{\partialfrac{\dot
v_i}{\theta_{i, j}}}.
\end{equation}
We will now show how to derive each of the right-hand side terms in turn.
We already know that $\mygreen{\partialfrac{v_i}{\theta_{i,j}}} = q_{i,j}$.
We also have $\myblue{\partialfrac{\dot v_N}{\dot v_i}} = u_i$.
Indeed, observe that
$\dot v_j$ only directly influences $\dot v_i$ for~$i \in \cC_i$. Therefore,
we have
\begin{equation}
\partialfrac{\dot v_N}{\dot v_j} = \sum_{i \in \cC_j}
\myblue{\partialfrac{\dot v_N}{\dot v_i}} q_{i, j} \quad ~ \forall j \in [N-1]
\label{eq:dot_v_N_dot_v_j}
\end{equation}
and $\partialfrac{\dot v_N}{\dot v_1} = 1$. 
Comparing \eqref{eq:v_N_v_j} and \eqref{eq:dot_v_N_dot_v_j}, 
we see that
${(\partialfrac{\dot v_N}{\dot v_i})}_i$
follows the same recursion as ${(\partialfrac{v_N}{v_i})}_i$.
Since $\partialfrac{\dot v_N}{\dot v_n} = \partialfrac{v_N}{v_n}$, both
sequences are equal:
\begin{equation}
\myblue{\partialfrac{\dot v_N}{\dot v_i}} = \partialfrac{v_N}{v_i} = e_i.
\end{equation}
Next, we derive $\myyellow{\partialfrac{\dot
v_i}{\theta_{i,j}}}$.
Since, for $j \in \cP_i$, $\dot v_j + z_{i, j}$ does not depend on $\theta_{i,
j}$, differentiating \eqref{eq:v_dot_i} \wrt $\theta_{i,j}$, we obtain
\begin{align}
    \myyellow{\partialfrac{\dot v_i}{\theta_{i, j}}} &=
\sum_{k \in \cP_i} \partialfrac{q_{i, j}}{\theta_{i, j}} (\dot v_k + z_{i, k})
\\
&= \sum_{k \in \cP_i} \frac{\partial^2 v_i}{\partial \theta_{i, j }
\partial \theta_{i, k}} (\dot v_k + z_{i, k})
\triangleq \dot q_{i,j}.
\end{align}
This can be conveniently rewritten in a vectorial form as
\begin{equation}
\dot \q_i = \nabla^2 \maxOmega(\btheta_i + \v) ~ (\z_i + \dot \v) 
= \J_\Omega(\q_i) ~ (\z_i + \dot \v),
\end{equation}
where we have defined $\dot \v \triangleq (\dot v_1, \dots, \dot v_N)$ and where
we have used the function $\J_\Omega$ defined in
\S\ref{appendix:omega_examples}, that conveniently computes the Hessian of
$\maxOmega$ from its gradient. The Hessian has this form for both negentropy and
$\ell_2^2$ regularizations. In a practical implementation, we only need to compute
the coordinates $(i, j)$ of $\dot \Q$,
for $j \in \cP_i$. Namely, as specified in~\eqref{eq:hessian},
\begin{equation}
    (\dot \q_{i, j})_{j \in \cP_i} \gets \J_\Omega\big((\q_{i, j})_{j \in \cP_i}\big)
    (z_{i,j} + \dot v_j)_{j \in \cP_i}.
\end{equation}
Finally, we derive $\mymagenta{\partialfrac{\dot v_N}{v_i}}$.
Since $v_j$ influences only $v_i$ and $\dot v_i$ for $i \in \cC_j$,
the chain rule gives 
\begin{equation}
\mymagenta{\partialfrac{\dot v_N}{v_i}} 
= \sum_{j \in \cC_i}
  \partialfrac{\dot v_N}{v_j} \partialfrac{v_j}{v_i}
  + \partialfrac{\dot v_N}{\dot v_j} \partialfrac{\dot v_j}{v_i}
  = \sum_{j \in \cC_j} \dot e_{i, j}
  \triangleq \dot{\bar{e_i}}.
\end{equation}
Combining the above, for any $j \in [N-1]$, we obtain the following two-step
recursion~\eqref{eq:hessian_backprop}, reproduced here:

\begin{equation}
    \forall\,i \in \cC_j,\quad\dot e_{i,j} = \dot q_{i,j} e_i + q_{i,j} \dot{\bar{e_i}}
\quad \text{and} \quad
\dot{\bar{e_j}} = \sum_{i \in \cC_j} \dot e_{i,j}.
\end{equation}
Similarly to the computation of $\nabla \dpOmega(\btheta)$, our algorithm
computes this recursion in reverse topological order over the graph $G$, yielding
$\nabla^2\dpOmega(\btheta)\Z = \dot \E$.

\section{Examples of algorithm instantiations}

We provide the explicit forms of $\maxOmega$ and its derivative for
the negentropy and $\ell_2^2$ regularizations. Then, we provide details and
pseudo-code for the two instances of differentiable dynamic programming presented in
\S\ref{sec:examples}.

\subsection{Examples of $\maxOmega$}\label{appendix:omega_examples}

\textbf{Negative entropy.} When $\Omega(\q) = \gamma \sum_{i=1}^D q_i \log
q_i$, where $\gamma > 0$ (smaller is less regularized), we obtain
\begin{align}
    \maxOmega(\x) &= \gamma \log\left(\sum_{i=1}^D \exp(x_i / \gamma)\right) \\
    \nabla \maxOmega(\x) &= \exp(\x/\gamma) \Big/ \sum_{i=1}^D \exp(x_i /
    \gamma) \\
    \nabla^2 \maxOmega(\x) &= \J_\Omega(\nabla \maxOmega(\x)),
\end{align}
where $\J_\Omega(\q) \triangleq (\diag(\q) - \q \q^\top) / \gamma$.  Note that
$\nabla \maxOmega(\x)$ recovers the usual ``softmax'' with temperature 
$\gamma =1$. For a proof of the expression of $\maxOmega$, see, \textit{e.g.,}
\citep[Example 3.25]{boyd2004convex}.

\paragraph{Squared $\ell_2$ norm.} When $\Omega(\x) = \frac{\gamma}{2}
\|\x\|^2_2$ with $\gamma > 0$, we obtain the following expressions
\begin{align}
    \maxOmega(\x) &= \langle \q^\star, \x \rangle - \frac{\gamma}{2}
    \|\q^\star\|^2_2\\
    \nabla \maxOmega(\x) &= \argmin_{\q \in \Simplex^D} \|\q - \x / \gamma\|^2_2 
    = \q^\star \\
    \nabla^2 \maxOmega(\x) &= \J_\Omega(\nabla \maxOmega(\x)),
\end{align}
where $\J_\Omega(\q) \triangleq (\diag(\s) - \s \s^\top / \|\s\|_1) / \gamma $
and $\s \in \{0,1\}^D$ is a vector that indicates the support of $\q$. Note that
$\nabla \maxOmega(\x)$ is precisely the Euclidean projection onto the simplex of
$\x / \gamma$ and can be computed exactly in worst-case $\bigO(D \log D)$ time
using the algorithm of \citep{michelot} or in expected $\bigO(D)$ time using the
randomized pivot algorithm of \citep{duchi}. It can be efficiently 
performed on Nvidia GPUs since recently. An important benefit of the
squared $\ell_2$ norm, compared to the negative entropy, is that $\nabla
\maxOmega(\x)$ tends to be sparse.  This is useful, among other things, to
define sparse attention mechanisms \citep{sparsemax,niculae_blondel_2017}.

\subsection{Sequence prediction with the smoothed Viterbi algorithm}%
\label{appendix:viterbi}

\textbf{Computational graph.} As illustrated in \S\ref{sec:examples}, the DAG
contains a start node, $S$ nodes for each time step and end node. Therefore
$|\cV|=N=TS+2$. Only nodes from consecutive time steps are connected to each
other. Taking into account the start and end nodes, the total number of edges is
therefore $|\cE| = (T-1)S^2 + 2S$.

\textbf{Representation.} We follow the notation of \S\ref{sec:examples}, \textit{i.e.}
we represent $\Y$ and $\btheta$ as $T \times S \times S$ tensors (we can safely
ignore the edges connected to the end node since their value is $0$). We
represent $\Y$ as a binary tensor such that $y_{t,i,j}=1$ if $\Y$ is in states
$i$ and $j$ in time steps $t$ and $t-1$, and $y_{t,i,j}=0$ otherwise. Likewise,
we represent the potentials $\btheta$ as a real tensor such that
$\theta_{t,i,j}$ contains the potential of transitioning from state $j$ to state
$i$ on time $t$.

\textbf{Algorithms.} Applying recursion \eqref{eq:smoothed_recursion} to this
specific DAG, we obtain a smoothed version of the Viterbi algorithm. Let
$v_{t,i}$ be the score of being in state $i$ up to time $t$. We can
rewrite the smoothed Bellman recursion as
\begin{equation}
v_{t,i}(\btheta) \triangleq \underset{j \in [S]}{\maxOmega} ~
v_{t-1,j}(\btheta) + \theta_{t,i,j} 
= \maxOmega(\v_{t-1}(\btheta) + \btheta_{t,i}).
\end{equation}
The value $\viterbiOmega(\btheta) \triangleq \maxOmega(\v_T(\btheta))$ can be
computed in topological order, starting from $\v_0(\btheta)$.  The total
computational cost is $\bigO(TS^2)$. Using the computations of \S\ref{sec:differentiation}
and \S\ref{sec:hessian} to this
specific DAG, we can compute $\nabla \viterbiOmega(\btheta)$, $\langle \nabla
\viterbiOmega(\btheta), \Z \rangle$ and $\nabla^2 \viterbiOmega(\btheta) \Z$
with the same complexity.  The procedures are summarized in Algorithm
\ref{algo:Viterbi_grad} and Algorithm \ref{algo:Viterbi_Hess}, respectively.
From Proposition \ref{proposition:dpOmega} property
\ref{property:dpOmega_convex}, $\viterbiOmega(\btheta)$ is a convex function for
any $\Omega$.

\begin{figure}[t]
\begin{widepage}
\colorbox{white}{
\begin{minipage}[t]{0.48\linewidth}
    \begin{algorithm}[H]
\begin{algorithmic}
  \Input Potential scores $\btheta \in \RR^{T \times S \times S}$
  \State \Comment{Forward pass}
  \State \mygray{$\v_{0} = \zeros_S$}
  \For{$t \in [1,\dots, T], i \in [S]$}
  \State $v_{t,i} = \maxOmega(\btheta_{t,i} + \v_{t-1})$
  \State $\q_{t,i} = \nabla\maxOmega(\btheta_{t,i} + \v_{t-1})$
  \EndFor
  \State $v_{T+1, 1} = \maxOmega(\v_T); \quad  \q_{T+1,1} = \nabla \maxOmega(\v_T)$
  \State \Comment{Backward pass}
  \State \mygray{$\u_{T+1} = (1, 0, \dots, 0) \in \RR^S$}
  \For{$t\in [T,\dots, 0], j \in [S]$} 
  \State $\e_{t, \cdot, j} = \q_{t + 1, \cdot, j} \circ \u_{t + 1};\quad 
  u_{t, j} = \langle \e_{t, \cdot, j}, \ones_S \rangle$
  \EndFor
  \State \Return $\viterbiOmega(\btheta) = v_{T+1, 1}$\\
  \hspace{1.3cm} $\nabla \viterbiOmega(\btheta) = (e_{t-1, i, j})_{t=1,i,j=1}^{T,S,S}$ \\
  \hspace{1.3cm} Intermediary computations for Alg.~\ref{algo:Viterbi_Hess}:

  \hspace{1.3cm} $\Q \triangleq (q)_{t=1,i,j=1}^{T + 1, S, S},
  \U \triangleq (u)_{t=1,j=1}^{T + 1, S}$
\end{algorithmic}
\caption{Compute $\viterbiOmega(\btheta)$ and $\nabla \viterbiOmega(\btheta)$}%
\label{algo:Viterbi_grad}
\end{algorithm}
\end{minipage}}\hfill
\begin{minipage}[t]{0.48\linewidth}
\begin{algorithm}[H]
\begin{algorithmic}
  \Input $\Z \in \RR^{T \times S \times S},
   \btheta \in \RR^{T \times S \times S}$
  \State Call Alg.~\ref{algo:Viterbi_grad} with input $\btheta$ to get $\U$, $\Q$
  \State \Comment{Forward pass}
  \State \mygray{$\dot \v_{0} = \zeros_S$}
  \For{$t \in [1,\dots, T], i \in [S]$}
  \State $\dot v_{t,i} =
  \langle \q_{t, i}, \z_{t, i} + \dot \v_{t-1} \rangle$
  \State $\dot \q_{t,i} = \J_\Omega(\q_{t,i}) ~ (\z_t + \dot \v_{t-1})$
  \EndFor
  \State $\dot v_{T+1,1} = \langle \q_{T+1, 1}, \dot \v_T \rangle;\quad \dot \q_{T+1, 1} = \J_\Omega(\dot \q_{T + 1, 1}) ~ \dot \v_T$
  \State \Comment{Backward pass}
  \State \mygray{$\dot \u_{T+1} = \zeros_S;\quad\dot \Q_{T+1} = \zeros_{S \times S}$}
  \For{$t\in [T,\dots, 0], j \in [S]$} 
  \State $\dot \e_{t, \cdot, j} = \q_{t + 1, \cdot, j} \circ \u_{t + 1}
  + \dot \q_{t + 1, \cdot, j} \circ \dot{\u}_{t + 1} $ 
  \State $\dot{u}_{t, j} = \langle \dot \e_{t, \cdot, j}, \ones_S \rangle$
  \EndFor
  \State \Return $\langle\viterbiOmega(\btheta),\Z\rangle = \dot v_{T+1}$ \\
  \hspace{1.3cm} $\nabla^2 \viterbiOmega(\btheta)\Z = (\dot e_{t-1,i, j})_{t=1,i,j=1}^{T,S,S}$
\end{algorithmic}
\caption{Compute $\langle\nabla \viterbiOmega(\btheta),\Z\rangle$
and $\nabla^2 \viterbiOmega(\btheta)\Z$}%
\label{algo:Viterbi_Hess}
\end{algorithm}
\end{minipage}
\end{widepage}
\end{figure}
\subsection{Monotonic aligment prediction with the smoothed DTW}
\label{appendix:dtw}

\textbf{Computational graph.} As illustrated in \S\ref{sec:examples}, the DAG
contains a start node and $N_A N_B$ nodes. Therefore, $|\cV|=N=N_A N_B + 1$.
Due to the monotonic constraint, each node may only be connected with at most
$3$ other nodes.
The cardinality of $\cY$
is the $\text{delannoy}(N_A{-}1,N_B{-}1)$ number \citep{sulanke2003,banderier2005}.
That number grows exponentially with~$N_A$ and~$N_B$.  

\textbf{Representation.} We follow the notation of \S\ref{sec:examples}, \textit{i.e.}
we represent $\Y$ and $\btheta$ as $N_A \times N_B$ matrices.  We represent $\Y$
as a binary matrix such that $y_{i,j}=1$ if $\a_i$ is aligned with $\b_j$, and
$y_{i,j}=0$ otherwise. Likewise, we represent $\btheta$ as a real matrix such
that $\theta_{i,j}$ is a measure of ``discrepancy'' between $\a_i$ and $\b_j$.

\textbf{Algorithms.} Following the DTW literature \citep{Sakoe78}, we seek
an alignment with \textit{minimal} cost. For that reason, we introduce
the smoothed min operator, its gradient and its Hessian as follows
\begin{align}
    \minOmega(\x) &\triangleq -\maxOmega(-\x) \\
    \nabla \minOmega(\x) &= \nabla \maxOmega(-\x) \\
    \nabla^2 \minOmega(\x) &= -\nabla^2 \maxOmega(-\x) \\
                           &= - \J_\Omega(\nabla \maxOmega(-\x)) \\
                           &= -\J_\Omega(\nabla \minOmega(\x)).
\end{align}
Applying \eqref{eq:smoothed_recursion} to the DTW DAG gives rise to a smoothed
version of the algorithm.  Let $v_{i,j}(\btheta)$ be the alignment cost up to
cell $(i,j)$. Then the smoothed DTW recursion is
\begin{equation}
    v_{i,j}(\btheta) = \theta_{i,j} + \minOmega(v_{i, j-1}(\btheta), v_{i-1,
    j-1}(\btheta), v_{i-1, j}(\btheta))
\end{equation}
The value $\dtwOmega(\btheta) \triangleq \v_{N_A,N_B}(\btheta)$ can be computed
in $\bigO(N_A N_B)$ time.  Applying the derivations of \S\ref{sec:differentiation}
and \S\ref{sec:hessian} to this specific DAG, we can compute
$\nabla \dtwOmega(\btheta)$, $\langle \nabla \dtwOmega(\btheta), \Z
\rangle$ and $\nabla^2 \dtwOmega(\btheta) \Z$ with the same complexity.  The
procedures, with appropriate handling of the edge cases, are summarized in
Algorithm \ref{algo:dtw_grad} and~\ref{algo:dtw_Hess}, respectively.

Note that when $\Omega$ is the negative entropy, $\dtwOmega(\btheta)$ is known
as soft-DTW \citep{soft_dtw}. While the DP computation of $\dtwOmega(\btheta)$
and of its gradient were already known, the generalization to any strongly
convex $\Omega$ and the computation of $\nabla^2 \dtwOmega(\btheta) \Z$ are
new. From Proposition \ref{proposition:dpOmega} property
\ref{property:dpOmega_convex}, $\dtwOmega(\btheta)$ is a \textit{concave}
function of the discrepancy matrix $\btheta$ for
any $\Omega$. With respect to time-series, $\dtwOmega$ is neither convex nor concave.

\begin{figure}[t]
\begin{widepage}
\colorbox{white}{
\begin{minipage}[t]{0.48\linewidth}
    \begin{algorithm}[H]
\begin{algorithmic}
  \Input Distance matrix $\btheta \in \RR^{N_A \times N_B}$
  \State \Comment{Forward pass}
  \State \mygray{$v_{0,0} = 0$; $v_{i,0} = v_{0,j} = \infty$, $i \in [N_A], j \in
  [N_B]$}
  \For{$i \in [1, \dots, N_A], j \in [1, \dots, N_B]$}
  \State $v_{i,j} = d_{i,j} + \minOmega(\myred{v_{i, j-1}}, \myblue{v_{i-1,
  j-1}}, \mygreen{v_{i-1, j}})$
  \State $\q_{i, j} = \nabla \minOmega(\myred{v_{i, j-1}}, \myblue{v_{i-1,
  j-1}}, \mygreen{v_{i-1, j}})
  \in \RR^3$
  \EndFor
  \State \Comment{Backward pass}
  \State \mygray{$\q_{i,N_B+1} = \q_{N_A+1,j} = \zeros_3$, $i \in [N_A], j \in
  [N_B]$}
  \State \mygray{$e_{i,N_B+1} = e_{N_A+1,j} = 0$, $i \in [N_A], j \in [N_B]$}
  \State \mygray{$\q_{N_A+1, N_B+1} = (0, 1, 0)$; $e_{N_A+1,N_B+1} = 1$}
  \For{$j \in [N_B, \dots, 1], i \in [N_A, \dots, 1]$}
  \State $e_{i,j} = \myred{q_{i, j+1, 1} ~ e_{i,j+1}} +
  \myblue{q_{i+1, j+1, 2} ~ e_{i+1,j+1}} +$
  \State \hspace{1.3cm} $\mygreen{q_{i+1, j, 3} ~ e_{i+1,j}}$
  \EndFor
  \State \Return $\dtwOmega(\btheta) = v_{N_A, N_B}$ \\
  \hspace{1.3cm} $\nabla \dtwOmega(\btheta) = (e)_{i,j=1}^{N_A,N_B}$ \\
  \hspace{1.3cm} Intermediate computations for Algo. \ref{algo:dtw_Hess}:

  \hspace{0.8cm} $\Q \triangleq (q)_{i,j,k=1}^{N_A+1,N_B+1,3}$;
  $\E \triangleq (e)_{i,j=1}^{N_A+1,N_B+1}$
\end{algorithmic}
\caption{\small Compute $\dtwOmega(\btheta)$ and $\nabla \dtwOmega(\btheta)$}
\label{algo:dtw_grad}
\end{algorithm}
\end{minipage}}\hfill
\begin{minipage}[t]{0.48\linewidth}
  \begin{algorithm}[H]
    \begin{algorithmic}
        \Input $\btheta \in \RR^{N_A \times N_B}, \Z \in \RR^{N_A \times N_B}$
        \State Call Algo. \ref{algo:dtw_grad} with input $\btheta$ to retrieve $\Q$ and $\E$
      \State \Comment{Forward pass}
      \State \mygray{$\dot v_{i,0} = \dot v_{0,j} = 0$, $i \in
          [0,\dots,N_A], j \in [N_B]$}
      \For{$i \in [1, \dots, N_B], j \in [1, \dots, N_A]$}
      \State $\dot v_{i,j} = z_{i,j} +
      \myred{q_{i, j, 1} ~ \dot v_{i, j-1}} + \myblue{q_{i, j, 2} ~ \dot v_{i-1,
      j-1}} +$
      \State \hspace{1.3cm} $\mygreen{q_{i, j, 3} ~ \dot v_{i-1, j}}$
      \State $\dot \q_{i, j} = -\J_\Omega(\q_{i, j}) ~ (\myred{\dot v_{i,
      j-1}}, \myblue{\dot v_{i-1, j-1}}, \mygreen{\dot v_{i-1, j}}) \in \RR^3$
      \EndFor
      \State \Comment{Backward pass}
      \State \mygray{$\dot \q_{i,N_B+1} = \dot \q_{N_A+1,j} = \mathbf{0}_3$, $i \in
      [0,\dots,N_A], j \in [N_B]$}
      \State \mygray{$\dot e_{i,N_B+1} = \dot e_{N_A+1,j} = 0$, $i \in
          [0,\dots,N_A],
      j \in [N_B]$}
      \For{$j \in [N_B, \dots, 1], i \in [N_A, \dots, 1]$}

      \State $\dot e_{i,j} =
      \myred{\dot q_{i, j+1, 1} ~      e_{i,j+1}} +
      \myred{q_{i, j+1, 1} ~ \dot e_{i,j+1}} + $
      \State \hspace{1.3cm} $\myblue{\dot q_{i+1, j+1, 2} ~      e_{i+1,j+1}} +
      \myblue{q_{i+1, j+1, 2} ~ \dot e_{i+1,j+1}} +$
      \State \hspace{1.3cm} $\mygreen{\dot q_{i+1, j, 3} ~ e_{i+1,j}} +
      \mygreen{q_{i+1, j, 3} ~ \dot e_{i+1,j}}$
      \EndFor
      \State \Return $\langle \nabla \dtwOmega(\btheta), \Z \rangle = \dot
      v_{N_A,N_B}$ \\
              \hspace{1.3cm} $\nabla^2 \dtwOmega(\btheta) ~ \Z = (\dot
              e)_{i,j=1}^{N_A,N_B}$
    \end{algorithmic}
    \caption{\small Compute $\langle \nabla \dtwOmega(\btheta), \Z \rangle$,
    $\nabla^2 \dtwOmega(\btheta) ~ \Z$}
    \label{algo:dtw_Hess}
  \end{algorithm}
\end{minipage}
\end{widepage}    
\end{figure}

\section{Experimental details and further results}

We finally provide details on the architecture used in experiments, with additionnals figures.

\begin{figure}[t]
    \centering
    \makebox[\textwidth][c]{
    \hspace{.4cm}    
    \includegraphics[width=1.1
    \textwidth]{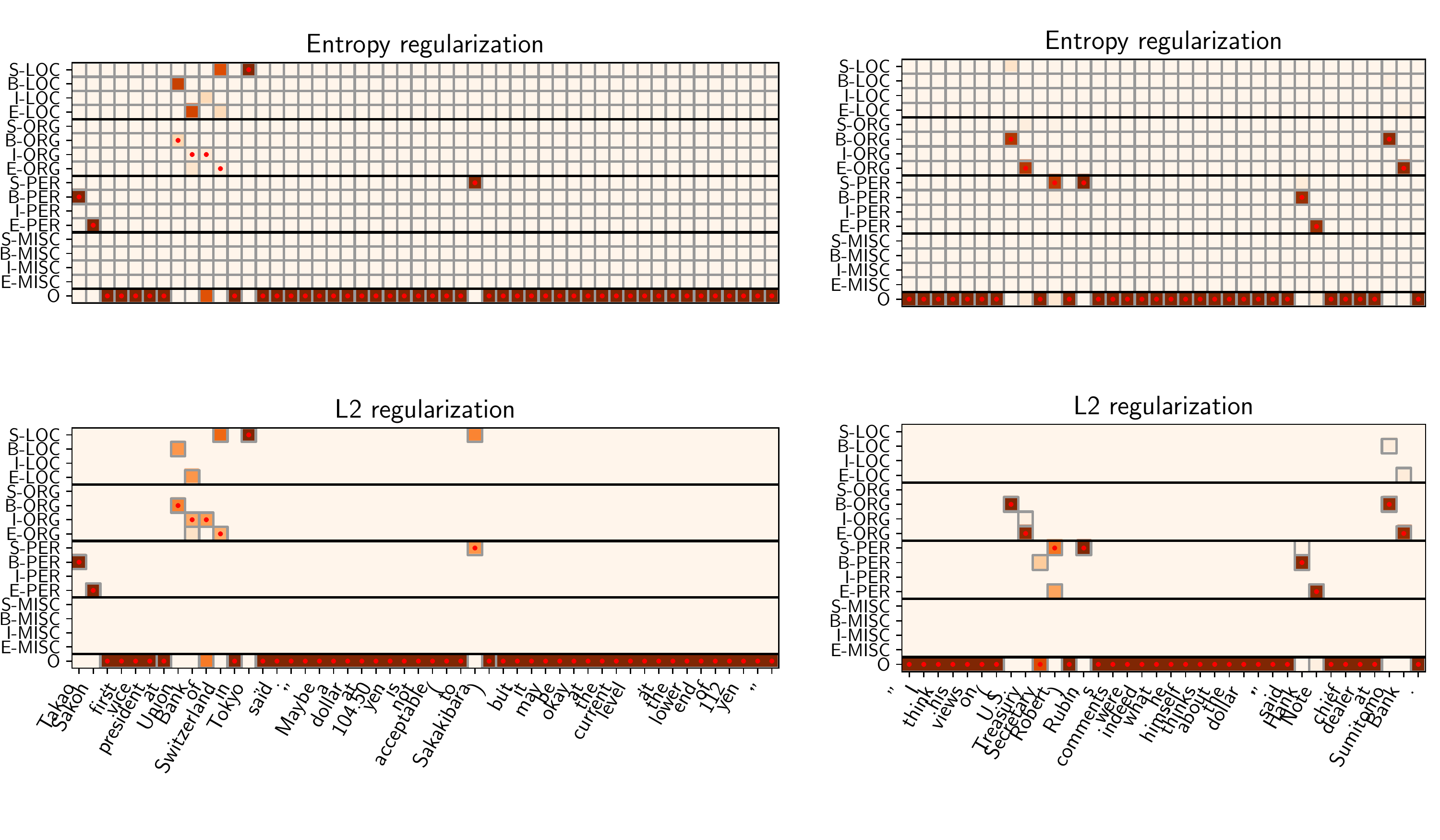}%
    }
    \caption{Test predictions from the entropy and $\ell_2^2$
        regularized named entity recognition (NER) models. Red dots indicate
        ground truth. When using $\ell_2^2$ regularization, model predictions are
    sparse (grey borders indicates non-zero cells).
    They are thus easier to introspect for ambiguities,
    as we can list a finite number of possible outputs.}\label{fig:ner}
\end{figure}

\subsection{Named entity recognition (section \S\ref{sec:ner})}\label{appendix:ner}

Our model extracts word embedding from a $300$-dimensional lookup table concatenated with
a $50$-dimensional character embedding. This character embedding corresponds to the concatenation
of the last hidden unit of a bi-directional character LSTM, as in~\citet{lample_2016}.
Character embedding size is set to $50$. A word LSTM then produces sentence-aware features
for each word. This LSTM is bi-directional with $100$-dimensional hidden units per direction.
The final features $\X$ used to build the potential tensor $\btheta$ are thus $200$-dimensional.
Note that, in contrast with~\citet{lample_2016}:
\begin{itemize}[topsep=0pt,itemsep=0pt,parsep=0pt]
    \item The look-up table is initialized with 300-dimensional 
        embeddings from
        \textit{FastText} \citep{joulin2016fasttext}, trained on Wikipedia corpus.
    \item We do not pad letters prior to feeding the character LSTM as it is not principled.
    \item We do not train the unknown word embedding as we found it had no effect.
\end{itemize}
We convert tags to the IOBES (Inside-Outside-Begin-End-Stop) scheme to build a richer $\viterbiOmega$ model than if we used 
the simpler IOB (Inside-Outside-Begin) scheme, that has a lower number of tags. We performed a small grid-search
 to select the step-size and batch-size used for optimization: $s \in \{0.005, 0.01, 0.02\}$, $b \in \{ 8, 32, 128\}$.
For each language and each loss, we select the highest-scoring model on the validation set, and report the 
test score.

The model is strongly subject to overfitting using the convex surrogate loss and the log likelihood. We have
to use a small batch size ($b = 8$) and vanilla SGD with large step size $(s = 0.01)$ to avoid this overfitting issue.
For all losses, accelerated stochastic optimizers have all lower generalization
performance than SGD, as also noticed
in~\citep{lample_2016} when using the classical negative log-likelihood as a loss.

\paragraph{Visualization.} The models using $\ell_2^2$ regularization perform
nearly on par with the ones using negentropy, as 
demonstrated in Table~\ref{table:ner_results}. On the other hand, $\ell_2^2$
regularization leads to 
tag probability vectors that are sparse and hence easier to parse. They allow to detect ambiguities more
easily. We display a few tagged English sequences in Figure~\ref{fig:ner}. The
model using $\ell_2^2$ regularization correctly identifies
an ambiguous entity (\textit{Union Bank of Switzerland}) and can be used to propose two tag sequences:
 \textit{(B-ORG, I-ORG, I-ORG, E-ORG)}
or \textit{(B-ORG, E-ORG, O, S-LOC)}. Probabilities of every tag sequence can be computed using the matrix
$\Q$, as described in \S\ref{sec:differentiation} --- this remains tractable as long as the matrix $\Q$ is 
\textit{sparse enough},
so that the number of non-zero probabilities sequence remains low.
On the other hand, the model using negentropy regularization
never assign a zero probability to any tag sequence
--- it is therefore not tractable to provide the user with a small set of interesting sequences.

\begin{figure}
    \centering
    \includegraphics[width=.8\textwidth]{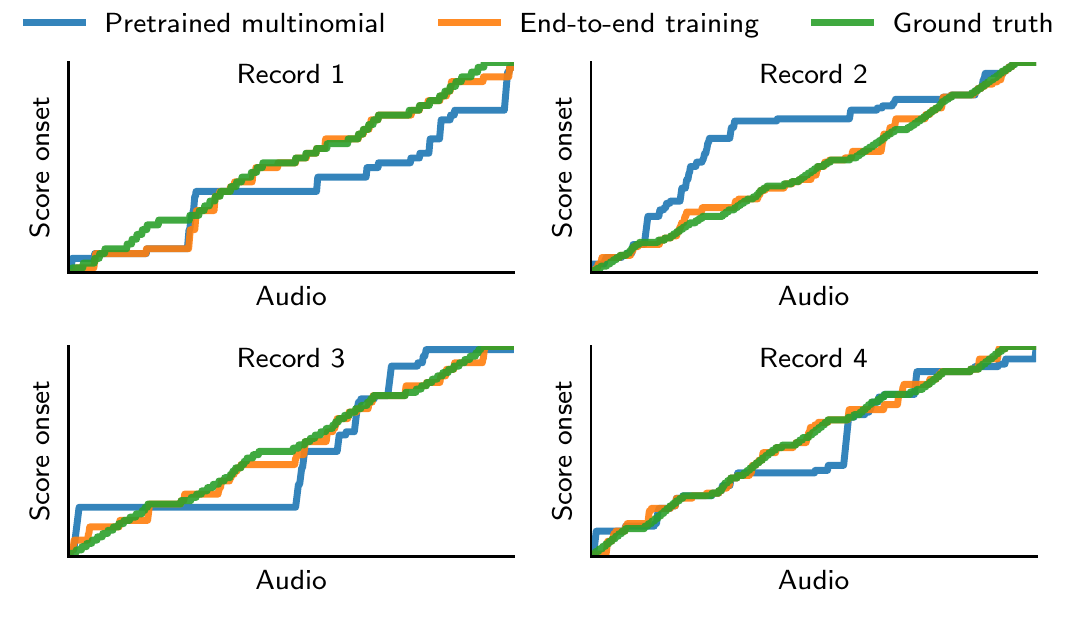}
    \caption{Alignment maps between score onsets and audio frames on test data from the Bach10 dataset.
    Our end-to-end trained model qualitatively performs better than the baseline model.}\label{fig:onsets}
\end{figure}

\subsection{Supervised audio-to-score transcription (section \S\ref{sec:audio})}\label{app:audio}

Audio sequences, sampled at $22.05\,\text{kHz}$, are split into frames of $512$ samples. We extract
the following features from these sequences: energy, spectral centroid, spectral bandwidth, and the 5 first
MFCC features. All features are centered around the median and normalized. 
The $\nabla \dtwOmega$ layer is written in
\textit{Cython}\footnote{\url{http://cython.org/}}, and hence run on CPU. This technical
choice was suggested by the fact that we have to write explicit loops to specify the topological and
reverse topological pass over the DTW computation graph (see Algorithm \ref{algo:dtw_grad}).
However, it is possible to use only contiguous vector operations and thus take advantage of GPU computations --- this is left
for future work. We use \textit{SciPy}'s\footnote{\url{http://scipy.org/}} LBFGS-B solver to perform end-to-end training and multinomial
regression. We use a $\ell_2^2$ regularization on the weight $\W$,:
we selected it using a grid search over $\{10^{-5}, 10^{-4}, \dots, 1\}$ and selected $10^{-3}$.

\paragraph{Further vizualisation.}In Figure~\ref{fig:onsets}, we display the alignment maps we obtained using our algorithm and using the
baseline multinomial model followed by a hard-DTW alignment computation. These alignment maps correspond to the
predicted onsets of keys. Our model (in orange) performs visibly better in
predicting onsets.

\subsection{Structured and sparse attention (section \S\ref{sec:structured_attention})}\label{appendix:attention}

We use \textit{OpenNMT-py} library\footnote{\url{http://opennmt.net/}} to fit our structured attention model.
 Model architecture and optimization details are as follow:
\begin{itemize}[topsep=0pt,itemsep=0pt,parsep=0pt]
    \item We use a bidirectional LSTM encoder and decoder, with 500 units in each direction
    and a depth of 2 layers .
    \item The decoder is fed with the input representation as in \citet{luong_effective_2015}.
    \item SGD training with $s = 1$ learning rate,
     decaying from epoch 8 to epoch 15 with rate $0.65$, batch size of size $256$.
    \item Training sentence of lengths superior to $50$ are ignored, and
    translated sentence are forced to a length inferior to $100$.
    \item The temperature parameter is set to $\gamma = 2$ for entropy, and $\gamma = 10$ for $\ell_2^2$.
    Performance is not affected much by this parameter, provided that it is not
    set too low in the $\ell_2^2$ case --- with a too small $\gamma$,
    $\viterbiOmega$ reduces to unregularized
    MAP estimation and $\nabla\viterbiOmega$ has zero derivatives.
\end{itemize}
We use a $1$-million sentence subject of WMT14 English-to-French corpus, available
at http://nmt-benchmark.net/. We use Moses tokenizer and do not perform any post-processing,
before computing BLEU score on detokenized sentences (\textit{multi\_bleu.perl} script).

\begin{table}[t]
    \caption{Detokenized BLEU score on newstest2014 data using
     regularized and unregularized attention.
    }\label{table:bleu}
\vskip 0.15in
\begin{center}
\begin{small}
\begin{tabular}{lcc}
    \toprule
    Attention model & WMT14 1M fr$\to$en & WMT14 en$\to$fr \\
    \midrule
    Softmax & \textbf{27.96} & \textbf{28.08} \\
    Entropy regularization &  \textbf{27.96} & 27.98 \\
    $\ell_2^2$ reg. & 27.21 & 27.28 \\
    \bottomrule
    \end{tabular}
\end{small}
\end{center}
\end{table}

\paragraph{Implementation.}We implemeted a batch version of the $\nabla\viterbiOmega$ layer on GPU,
using the \textit{PyTorch} tensor API. Model with negentropy-regularized attention mechanism
runs $1/2$ as fast as the softmax attention mechanism (approximately $7500$ tokens/s vs $15000$ tokens/s
on a single Nvidia Titan X Pascal).
With $\ell_2^2$ regularization, it is only $1/3$ as fast: approximately $5000$ tokens/s. Although this remains reasonable,
it could certainly
be optimized by rewriting kernels using lower-level languages (\textit{e.g.,} using \textit{ATen} API
from \textit{PyTorch}.)

\paragraph{Further results.}Table~\ref{table:bleu} provides BLEU scores for
both translation directions on the 1 million sentence subset of WMT14 we used. We observe
that the introduction of structure and sparsity does not hinder the general performance of the model.
We provide several examples of attention maps in Figure~\ref{fig:plenty_attention},
that illustrate the sparsity patterns $\ell_2^2$ regularization uncovers.

\begin{figure}
    \begin{center}
    \makebox[\textwidth][c]{    
    \includegraphics[width=1.1\textwidth]{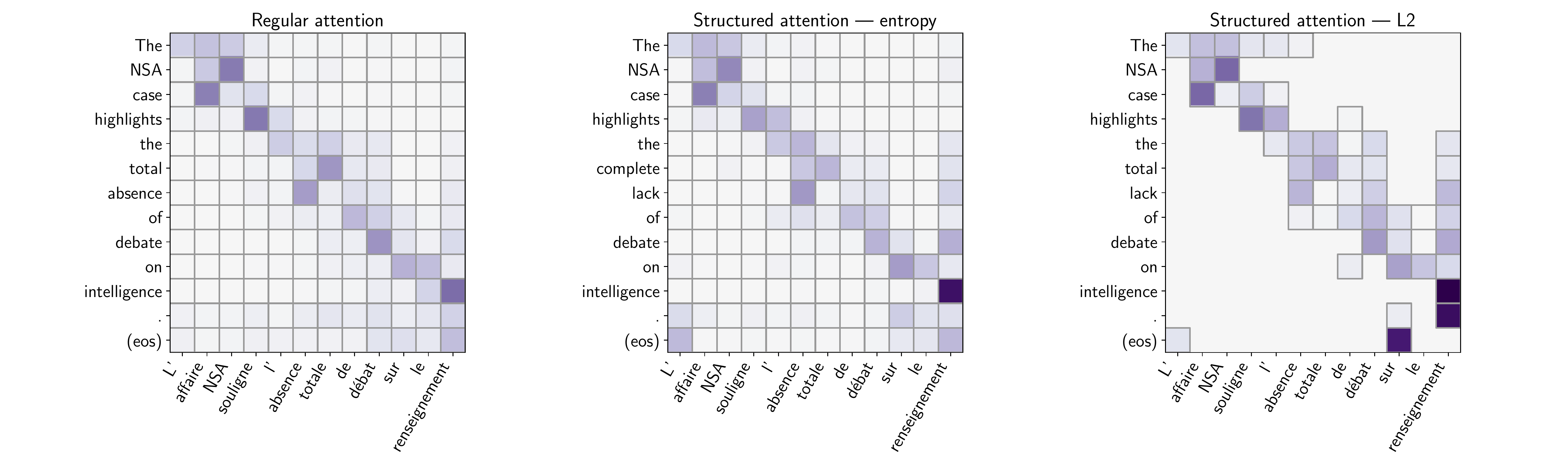}
    }
    \makebox[\textwidth][c]{        
    \includegraphics[width=1.1\textwidth]{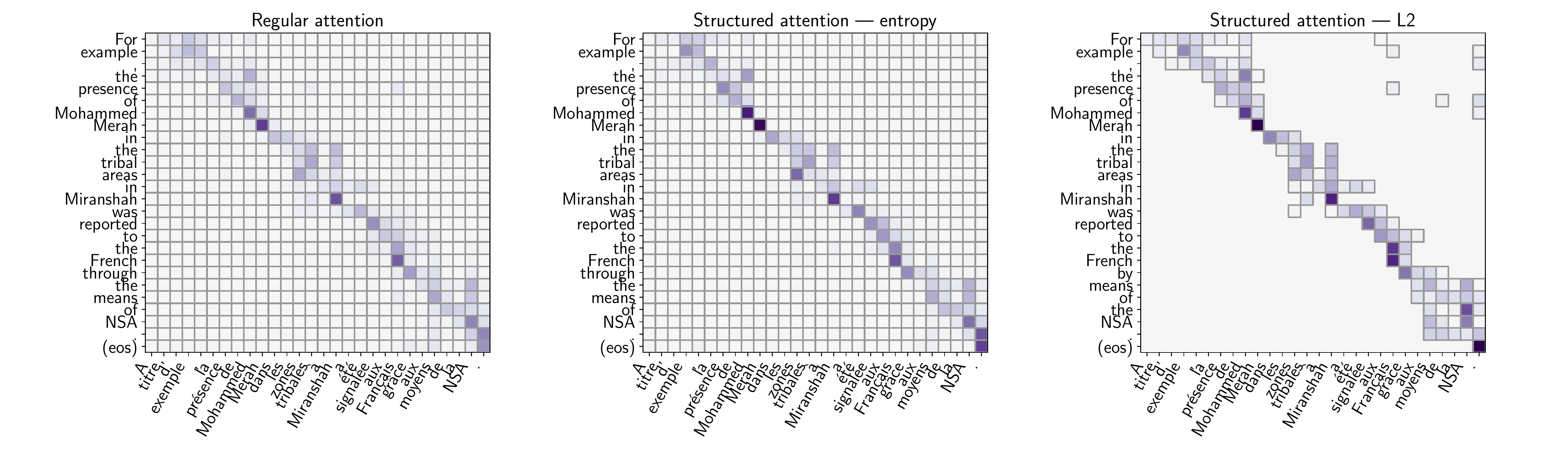}
    }
    \makebox[\textwidth][c]{        
    \includegraphics[width=1.1\textwidth]{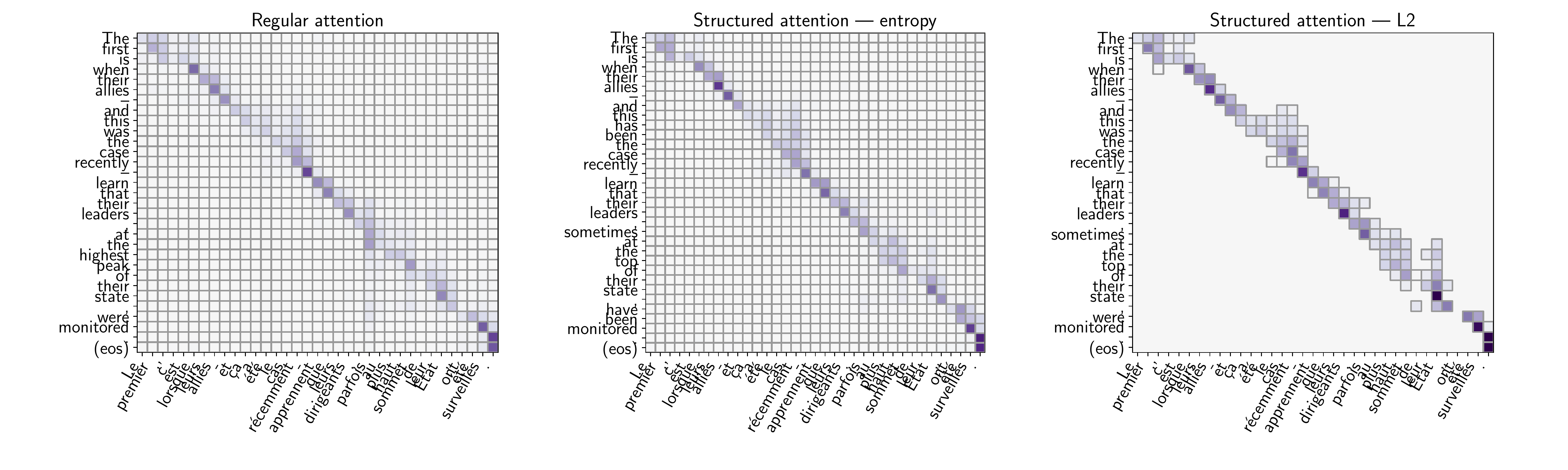}
    }
    \makebox[\textwidth][c]{    
    \includegraphics[width=1.1\textwidth]{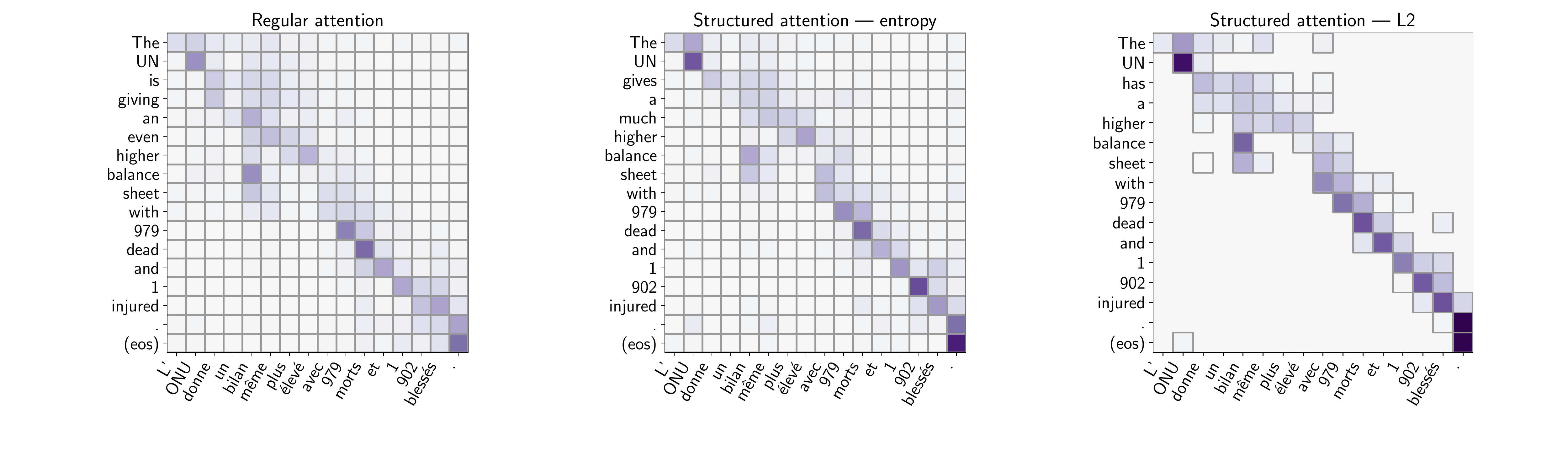}
    }

    \caption{Attention on test samples from Newstest2014. Borders indicate
    non-zero cells. Translations ($y$-axis) are often qualitatively equivalent,
     while attentions maps are sparse 
    in the $\ell_2^2$ case.}\label{fig:plenty_attention}
    \end{center}
\end{figure}

\end{document}